\documentclass[conference,a4paper]{IEEEtran}
\IEEEoverridecommandlockouts
\usepackage[left=1.59cm, right=1.59cm, top=1.9cm, bottom=4.4cm]{geometry}

\usepackage{cite}
\usepackage{amsmath,amssymb,amsfonts}
\usepackage{algorithmic}
\usepackage{graphicx}
\usepackage{mathtools}
\usepackage{textcomp}
\usepackage{xcolor}
\usepackage[most]{tcolorbox}
\usepackage{amsthm}

\makeatletter
\let\MYcaption\@makecaption
\makeatother

\usepackage[font=footnotesize]{subcaption}

\makeatletter
\let\@makecaption\MYcaption
\makeatother

\usepackage{booktabs}
\usepackage{multirow}

\usepackage{xspace}
\newcommand{\ie}{\emph{i.e., }}
\newcommand{\eg}{\emph{e.g., }}
\newcommand{\dataset}[1]{\textsc{#1}}
\newtheorem{proposition}{Proposition}
\newtheorem*{proposition*}{Proposition}
\newtheorem{assumption}{Assumption}
\newtheorem{definition}{Definition}
\newtheorem{lemma}{Lemma}
\newtheorem{corollary}{Corollary}

\DeclareMathOperator*{\argmin}{arg\,min}

\newcommand{\txtmode}[1]{\textsc{\textsf{#1}}}

\def\BibTeX{{\rm B\kern-.05em{\sc i\kern-.025em b}\kern-.08em
    T\kern-.1667em\lower.7ex\hbox{E}\kern-.125emX}}
\begin{document}

\title{Compression Through Time: An Interplay of Separability, Stationarity and Capacity\\
\thanks{This research was funded in part by the Austrian Science Fund (FWF) under grant number 10.55776/PAT7753623. }
}

\author{\IEEEauthorblockN{Nikolaus Kopp}
\IEEEauthorblockA{%
\textit{Graz University of Technology}\\
Graz, Austria\\
nkopp@tugraz.at}
\and
\IEEEauthorblockN{Franz Pernkopf}
\IEEEauthorblockA{%
\textit{Graz University of Technology}\\
Graz, Austria\\
pernkopf@tugraz.at}
\and
    \IEEEauthorblockN{Bernhard C. Geiger}
    \IEEEauthorblockA{
        \textit{Graz University of Technology} \&\\
        \textit{Know Center Research GmbH}\\
        Graz, Austria\\
        geiger@tugraz.at}
}

\maketitle

\begin{abstract}
Information-theoretic and geometric compression have been widely studied as possible explanations for generalization. Yet their temporal dynamics in recurrent networks remain largely unexplored. Using discretized LSTMs, whose finite embedding space enables direct estimates of information quantities, we observe pronounced compression across the final timesteps of many datasets. We explain this effect through three competing mechanisms: (i) class separability permits stronger compression at the final timestep; (ii) shared recurrent dynamics propagate this pressure across timesteps with similar activation states, making its extent depend on temporal stationarity; and (iii) limited representational capacity opposes prolonged compression because it restricts information available for subsequent computation. We systematically test these mechanisms through controlled synthetic interventions and real-world sequence tasks. Our results identify class separability, temporal stationarity, and network capacity as key determinants of temporal compression, and suggest that compression acts as a regularizer---potentially beneficial when excess capacity is available, but harmful to generalization when it consumes memory needed for the task.

\end{abstract}

\begin{IEEEkeywords}
binary-activation LSTMs, compression, generalization
\end{IEEEkeywords}
\section{Introduction}
Neural networks (NNs) owe much of their success to empirical advances and computational scale, yet our theoretical understanding of their inner workings remains limited. Information theory has emerged as a promising lens for analyzing NN dynamics, particularly following the information plane analysis of Shwartz-Ziv and Tishby~\cite{shwartz2017opening}. By tracking the information content of layer activations during training, this work popularized the idea that internal embeddings may undergo distinct expansion and compression phases. A parallel line of work revealed striking geometric regularities in learned representations. Most notably, neural collapse shows that, late in training, penultimate-layer embeddings can concentrate to their class means, which approach a highly symmetric configuration~\cite{papyan2020prevalence}. 

These lines of work have sparked broad interest in compression and representation geometry not only as correlates of generalization, but as possible mechanisms underlying it. In this view, the progressive contraction of representations helps discard task-irrelevant variation, improve class separation, and thereby promote robustness and generalization. A substantial body of work has argued both for and against this view, leaving the question open~\cite{saxe2019information, geiger2021information, lorenzen2022information, wu2024linguistic, gao2023towards}.

Information-theoretic analyses have also been applied to recurrent NNs, \eg to study information processing and task computation~\cite{hintze2025identifying,tomoda2026emergence}, predictive information~\cite{dong2019predictive}, and the relationship between network capacity and trainability~\cite{collins2017capacity}. Similar investigations have also been conductedfor spiking NNs~\cite{engelken2022time,tomoda2026emergence}. Only a few studies have considered temporal information dynamics in recurrent systems: Engelken et al.~\cite{engelken2022time} analyze such dynamics in spiking NNs, while Baker et al.~\cite{baker2022information} record temporal trajectories in conventional RNNs without analyzing the mechanisms that shape them. This is a notable gap, since recurrence enables networks to manage information over time, making recurrent networks a particularly interesting setting for studying information dynamics. This motivates our study of temporal information dynamics in recurrent networks.

To make this analysis tractable, we consider LSTMs with binary hidden state embeddings (Section~\ref{sec:methodology}). The finite state space permits the reliable estimation of information-theoretic quantities, avoiding challenges associated with continuous representations. This controlled setting reveals a phenomenon superficially similar to the expansion and compression phases observed in information plane analyses: as shown in Fig.~\ref{fig:intro_figure}, on some datasets the entropy of the final recurrent layer embedding initially expands, then compresses as more elements of the sequence are processed. Since these phases evolve over sequence indices (\ie time) rather than epochs, since recurrent NNs behave inherently different from feed-forward NNs, and since estimation artifacts are carefully ruled out as confounders explaining these phases, this observed behavior calls for explanation. To this end, we formulate a series of hypotheses (Section~\ref{sec:hypothesis}) that disentangle the effects of several factors on entropy. Tested on synthetic and real-world datasets (Section~\ref{sec:setup}), we experimentally analyze the mechanisms shaping temporal compression in Section~\ref{sec:experiments}.

\begin{figure}[t]
    \centering
    \includegraphics[height=3.8cm]{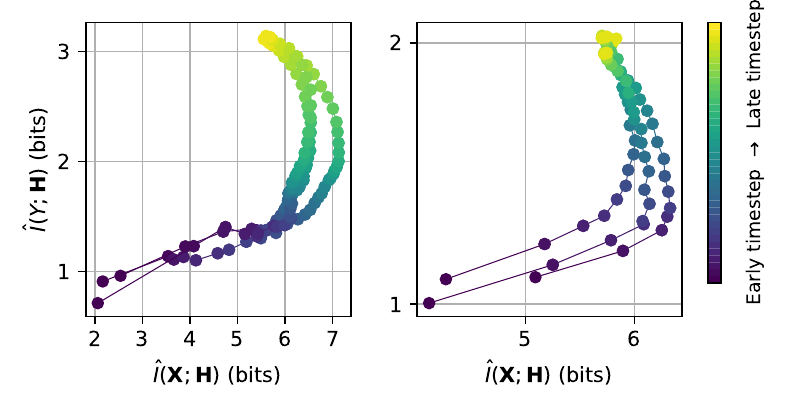}

    \caption{Embedding entropy and target information of last recurrent layer of an LSTM as it evolves over time. Each trajectory corresponds to a training trial. \textbf{Left: } Dataset \dataset{MNIST stroke}. \textbf{Right:} Dataset \dataset{Tiselac}}
    \label{fig:intro_figure}
\end{figure}

\section{Methodology}
\label{sec:methodology}
We consider classification tasks, where each input is a sequence $\mathbf{x} = \{\mathbf{x}^t\}_{t=1}^T$ of $T$ elements $\mathbf{x}^t \in \mathbb{R}^F$. Sequences may vary in length $T$, but the feature dimension $F$ is fixed for a task. Each sequence is associated with one of $C$ classes via class label $y \in \{1,\ldots,C\}$. We assume that sequences and labels are jointly distributed according to $p_{\mathbf{X}, Y}$. Given a dataset $\mathcal{D} = \{(\mathbf{x}_{(n)}, y_{(n)})\}_{n=1}^N$ of $N$ samples of the unknown joint $p_{\mathbf{X}, Y}$, the goal is to learn a function $\hat{f}_{\mathcal{W}}{:}\ \bigcup_{T\in\mathbb{N}} \mathbb{R}^{T\times F} \to \Delta^{C-1}$, parametrized by weights $\mathcal{W}$, which maps sequences of arbitrary length to the probability simplex $\Delta^{C-1}$. The parameters $\mathcal{W}$ are optimized by minimizing an empirical loss over the training set $\mathcal{D}_{tr} \subset \mathcal{D}$. We refer to sequence index $t$ as timestep and to the associated dimension as \emph{time}. Further, we will use upper-case letters for random variables (RVs) such as $\mathbf{X}$ and $Y$, and lower-case letters for their realizations. 

Classification is performed via recurrent neural structures.
They consist of at least one recurrent layer that sequentially processes elements $\mathbf{x}^t$ while keeping a state $\mathbf{s}^{t-1}$ that carries information from previously processed elements.
We use LSTMs, whose computations are as follows: The concatenation $\mathbf{z}^t = [\mathbf{x}^t; \mathbf{h}^{t-1}]$ is first passed through four perceptrons $\phi_\kappa(\mathbf{W}_\kappa \mathbf{z}^t + \mathbf{b}_\kappa) = \boldsymbol{\kappa}^t \in \mathbb{R}^{d}$, for $\kappa \in \{i,f,o,c\}$, where $d$ denotes the number of units in the respective layer and $\phi_\kappa \in \{\operatorname{tanh}, \operatorname{sigmoid}\}$ is a component-wise activation function. The computed quantities are then used to update the cell state $\mathbf{s}^t = \mathbf{f}^t \odot \mathbf{s}^{t-1} + \mathbf{i}^t \odot \mathbf{c}^t$ and the (pre-)embedding $\tilde{\mathbf{h}}^t = \mathbf{o}^t \odot \tanh(\mathbf{s}^t)$. While in conventional LSTMs $\mathbf{h}^t = \tilde{\mathbf{h}}^t$, we apply layer normalization $\operatorname{LN}(\cdot)$ followed by a component-wise sign quantizer $\mathcal{Q}{:}\ \mathbf{R}^{d} \longrightarrow \{-1, 1\}^{d}$ to obtain the embedding, \ie $\mathbf{h}^t = \mathcal{Q}\left(\operatorname{LN}(\tilde{\mathbf{h}}^t)\right)$. The weights $\mathbf{W}_\kappa$ and biases $\mathbf{b}_\kappa$ make up the trainable parameters $\mathcal{W}$ of the recurrent layer.
Both cell state $\mathbf{s}^t$ and embedding $\mathbf{h}^t$ make up the information in the recurrent path but only $\mathbf{h}^t$ is sent to the next layer. 




We are interested in analyzing the information content of the binary embedding RV $\mathbf{H}^t$, specifically we want to track how mutual information $I(\mathbf{X}, \mathbf{H}^t)$ and $I(\mathbf{H}^t, Y)$ change over time $t$. Since $\mathbf{H}^t$ is obtained from a deterministic transformation of $\mathbf{X}$, we have $I(\mathbf{X}, \mathbf{H}^t) = H(\mathbf{H}^t)$ and therefore refer to $I(\mathbf{X}, \mathbf{H}^t)$ as \emph{embedding entropy}. Since $\mathbf{H}^t$ has $2^d$ possible states, for small $d$ its entropy can be estimated reliably from the relative frequencies of $\mathbf{h}^t$ observed on a test set $\mathcal{D}_{te}$ (plug-in estimate). Similarly, we approximate the \emph{target information} $I(\mathbf{H}^t, Y)$ by first estimating the joint distribution $p_{\mathbf{H}^t,Y}$ from the relative frequencies of $(\mathbf{h}^t,y)$ pairs and then substituting it in the mutual information formula.

\section{Hypotheses}
\label{sec:hypothesis}
In this section, we discuss and hypothesize potential mechanisms underlying the observed compression effect. First, we use established insights on neural collapse to explain compression at the final timestep. Second, we propose a mechanism through which objectives acting at some timesteps, \eg compression, can propagate to other timesteps. Third, we discuss an opposing force, namely reduced expressiveness.

\textbf{A) Last timestep compression.}

Neural collapse~\cite{papyan2020prevalence} states that, for separable classes, the cross-entropy loss combined with a softmax classifier tries to maximize the distance between penultimate-layer embeddings belonging to different classes. We argue, that a similar effect occurs in discrete embedding space. Here, this amounts to moving embeddings to the vertices of this hypercube that are maximally far apart. Indeed, in Appendix~\ref{sec:theory}, we show that, for a frozen softmax classifier, standard training of binary-activation NNs locally moves every pre-embedding $\tilde{\mathbf h}^T$ towards their unique loss-minimizing orthant. In essence, information compression can be considered a discrete-valued counterpart of neural collapse~\cite{papyan2020prevalence}.

\textbf{B) Constraint propagation.}
Having a hypothesis for compression at the last timestep, the observed compression already occurring during the preceding few timesteps demands an explanation. We posit that the same effect is responsible, and that the spill-over to other timesteps is caused by the mechanics of the recurrent structure. Specifically, we conjecture that an objective imposed on an LSTM at one timestep generalizes to other timesteps at which the LSTM occupies a similar state, \ie when its activation pattern $\mathbf{z}^t$ and, to some extent, cell state $\mathbf{s}^t$ are similar. This follows because these quantities are the only ones that distinguish one timestep from another and therefore also the only ones to carry temporal information. Consequently, the operations performed by an LSTM can differ across timesteps either because the input is non-stationary or because the LSTM uses some of its computational capacity to encode temporal information. If neither effect is strong, any objective acting at one timestep will similarly affect other timesteps. Indeed, activation distributions have been observed to approach stationarity~\cite{cooijmans2017recurrent}. We argue that this, combined with last timestep compression, explains the compression observed in Fig.~\ref{fig:intro_figure}.

\textbf{C) Compression hurts expressiveness.}
While the preceding discussion focused on mechanisms that promote compression across timesteps, there is also an opposing effect: reduced network expressiveness. As embeddings become more compressed, the space of functions representable by the LSTM is increasingly restricted. 
This argument, however, is not limited to discrete representations. Even though continuous representations can, in principle, encode unbounded amounts of information, objectives such as $L_2$ regularization are well known to reduce network complexity.

\textbf{Combined effect.}
In summary, we speculate that the softmax objective creates competing forces on embedding entropy. On the one hand, the loss can benefit from compression at the final timestep; on the other hand, if this compression propagates through time, it can impair the network's ability to make accurate predictions. Based on the mechanisms described above, we hypothesize that the extent of compression is largely determined by three factors: (i) The separability of the data determines the compression at the final timestep. (ii) The stationarity of the activation patterns determines how compression propagates across timesteps. (iii) The excess representational capacity of the network determines how much capacity can be devoted to compression or to a circuit that tracks time and initiates compression at late timesteps.

\section{Experimental Setup}
\label{sec:setup}
\paragraph{Datasets}
We use the processed real-world datasets \dataset{MNIST digit stroke}\cite{dejong2016mniststrokes}, \dataset{Tiselac}\cite{tiselac}, and \dataset{ElectricDevices}~\cite{electricdevices}. For \dataset{MNIST digit stroke}, we discard digits drawn with more than three strokes to remove outliers, trim end-of-sequence tokens, and resample all strokes to a common sequence length of $T=38$. 
%
%
We furthermore generate synthetic sequences--label pairs $\{(\mathbf{x}_n, y_n)\}_n$ based on class-specific AR(3) processes with reflection coefficients $\mathbf{k}^y=[0.7,\,0.7,\,0.5(-1)^y]$, where $y\in\{0,1\}$ and set the innovation variance such that each process has unit marginal variance. For details on dataset creation and preprocessing see Appendix~\ref{app:datasets}.

\paragraph{End-of-sequence signal}
As an intervention, we add an end-of-sequence (EOS) signal to the last recurrent layer in some experiments. In these cases, we expand the layer's input with an additional feature dimension that is set to 0 for all but the final $t_\mathrm{EOS}$ timesteps.

\paragraph{Network and training setup}
We use 1-2 LSTM layers followed by a fully connected feed-forward layer with softmax activation. The loss is the cross-entropy between the network predictions and the one-hot-encoded labels. Weights are updated using backpropagation through time. Since the quantizer $\mathcal{Q}(\cdot)$ is non-differentiable, its derivative is replaced by a \emph{surrogate}, namely the derivative of $\operatorname{tanh}$. This heuristic is known as the \emph{straight-through estimator} (STE)~\cite{bengio2013estimating}.

We report results using \emph{post-training early stopping}, \ie networks are trained for a fixed number of epochs ensuring convergence, while performance on the validation and test sets is evaluated throughout training. After training, we identify the epoch at which validation accuracy is highest and report the corresponding test-set performance. This ensures a consistent and fair evaluation, as binarized recurrent architectures exhibit substantial training instability~\cite{kopp2026learning}. 

\section{Experiments}
\label{sec:experiments}
We structure the experiments around the three components of our hypothesis. We first isolate the mechanisms that promote compression. Then we examine how limited temporal information affects the propagation of compression, and, finally, we investigate the relationship between compression, network capacity, and generalization.

\subsection{Last timestep compression}


Complementing our theoretical results in Appendix~\ref{sec:theory}, we first experimentally verify that (i) softmax induces  the penultimate embedding $\mathbf{H}^t$ compression at last timestep $T$, and (ii) that this compression effect depends on separability of pre-embeddings $\tilde{\mathbf{H}}^T$. To isolate the behavior of the loss on final timestep pre-embeddings, we directly generate and manipulate final timestep pre-embeddings $\tilde{\mathbf{h}}^T$. For $d=10$, we generate balanced $C$-class embeddings in $\mathbb{R}^d$ by sampling class means $\{\boldsymbol{\mu}_y\}_{y=1}^C$ uniformly from the unit $d$-ball and drawing $\tilde{\mathbf{h}}_n^T\sim\mathcal{N}(\boldsymbol{\mu}_{y},\mathbf{I}_d)$ for each label $y$, where $\mathbf{I}_d$ denotes the identity matrix. 

We pass these synthetic pre-embeddings through the quantizer $\mathcal{Q}(\cdot)$ followed by a softmax layer. We not only update the softmax layer with gradients, but also the pre-embeddings themselves. This setup can be understood as simulating an arbitrarily powerful encoder $\hat{f}_e$, where $\hat{f}_e(\mathbf{x}) = \tilde{\mathbf{h}}^T$. To model more realistic encoders, we add a penalty to the cross-entropy loss for separating pre-embeddings which were initially nearby: Denoting the initially generated and the adapted pre-embeddings by $\tilde{\mathbf{h}}^{(0)}$ and $\tilde{\mathbf{h}}$, the penalty is
\begin{equation*}
\mathcal{L}_{\mathrm{sep}} \! = \! \frac{\lambda_{sep}}{|\mathcal{P}|} \! \sum_{(i,j)\in\mathcal{P}} \! \! \! \operatorname{ReLU} \!  \left( \left\lVert \tilde{\mathbf{h}}_i-\tilde{\mathbf{h}}_j\right\rVert_2 \! - \left\lVert \tilde{\mathbf{h}}_i^{(0)}-\tilde{\mathbf{h}}_j^{(0)}\right\rVert_2\right),
\end{equation*}
where $\lambda_{sep}$ is a \emph{separation cost}, $\operatorname{ReLU}$ is the rectified linear function, and $\mathcal{P} = \left\{(i,j) \mid i \in [1,\dots, N],\; j \in \mathcal{N}_5(i)\right\}$, where $\mathcal{N}_5(\cdot)$ is the set of 5-nearest neighbors evaluated on pre-embeddings prior to training $\tilde{\mathbf{h}}^{(0)}$. We vary $\lambda_{sep}$ and optimize pre-embeddings for 5000 steps with a learning rate of $0.03$. Each experiment is repeated five times with different pre-embedding and weight initializations. Fig.~\ref{fig:simulated} shows the result. As expected, an arbitrarily powerful encoder, \ie $\lambda_{sep}=0$, learns to compress pre-embeddings of each class into a single quantization cell. A positive separation cost $\lambda_{sep}$, however, leads to embedding with higher entropy. Note that this effect happens even without a deterioation in classification performance.

\begin{figure}[t]
    \centering

    \includegraphics[width=0.65\linewidth]{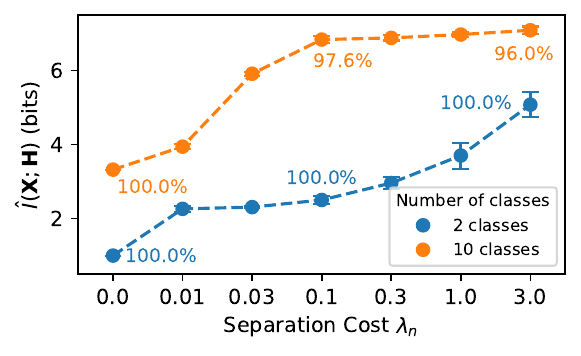}

    \caption{Empirically estimated embedding entropy $\hat{I}(\mathbf{X}; \mathbf{H})$ after directly optimizing pre-embeddings to minimize softmax loss. Entropy is plotted over the \emph{separation cost} $\lambda_{sep}$, which penalizes separating pre-embeddings which are initially close. The resulting classifier accuracies are shown for $\lambda_{sep} \in \{0, 0.1, 3.0\}$.}
    \label{fig:simulated}
\end{figure}

\subsection{Constraint Propagation}
\paragraph{EOS intervention}
Having theoretically and experimentally established that softmax with cross-entropy loss incentivizes low embedding entropy, we now investigate this effect in LSTMs with a single recurrent layer, trained on the synthetic AR dataset with sequence length $T=150$. In particular, we examine how compression propagates to earlier timesteps, and how an EOS signal, inducing temporal information and violating stationarity, affects propagation.

We consider three EOS signals $t_\mathrm{EOS}\in \{0,1,10\}$ ($t_\mathrm{EOS}=0$ means no EOS signal is present), and perform both training and evaluation using the same $t_\mathrm{EOS}$. Each experiment is repeated for five trials with independent weight initializations. Fig.~\ref{fig:ar_eos} shows the embedding entropy on the test set at the epoch of highest validation accuracy, and Table~\ref{tab:ar_eos} shows a summary of interesting datapoints. In all settings it can be seen that $\hat{I}(\mathbf{X}; \mathbf{H}^t)$ underwent a phase of substantial reduction, with the smallest values assumed at the last timestep $T$. Compression apparently propagates from this last timestep to previous timesteps as well.

Strikingly, informing LSTMs about EOS causes most of the embedding compression to be contained within the timesteps in which the EOS signal is active. This temporal localization manifests in two ways: (i) embedding entropy is substantially higher before the EOS signal, and (ii) it can reach lower values at the final timestep. A possible explanation follows from the dual role of the binary embedding: providing the penultimate embedding for readout and serving as context for LSTM state updates. Although the task is learned in all settings (see accuracies in Fig.~\ref{fig:ar_eos}), compression appears to have a negative impact on network flexibility. Since the EOS signal carries no information about the target, its associated weights can specialize in a single role—--invoking compression at the final timestep. This specialization may be easier to learn because it is consistent across samples. As the EOS-associated weights increasingly absorb the compression pressure, less compression is imposed on earlier timesteps. This frees up earlier embeddings, increasing network flexibility and potentially enabling faster convergence. As shown in Table~\ref{tab:ar_eos}, training converges slower in the absence of an EOS signal, suggesting that extended compression interferes with learning.


\begin{figure}[t]
    \centering
    \includegraphics[width=0.99\linewidth]{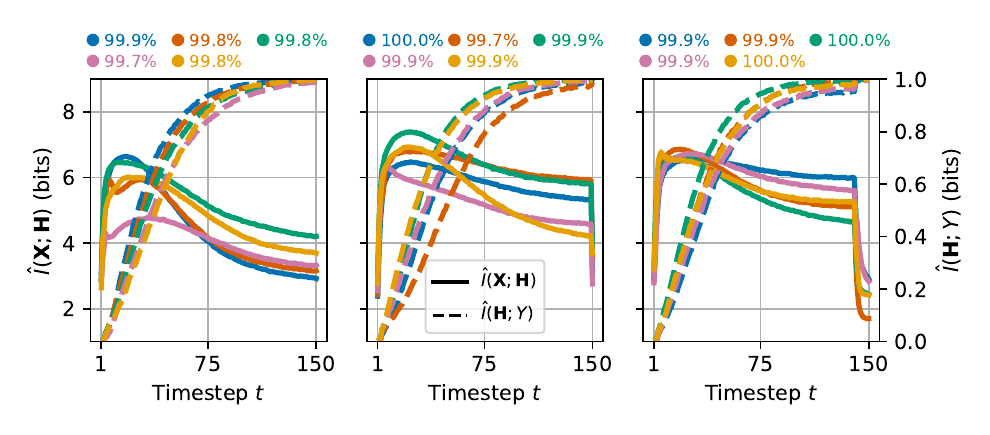}
    \caption{Embedding entropy $\hat{I}(\mathbf{X}; \mathbf{H}^t)$ and target information $\hat{I}(\mathbf{H}^t;Y)$ over timesteps $t$. LSTM networks are trained and evaluated on the synthetic AR dataset. Each experiment comprises five trials, shown in different colors. Test set accuracies are displayed above the subplots. \textbf{Left: } No EOS token ($t_\mathrm{EOS}=0$). \textbf{Middle: } EOS token only at final timestep ($t_\mathrm{EOS}=1$). \textbf{Right: } EOS token for final 10 timesteps ($t_\mathrm{EOS}=10$).}
    \label{fig:ar_eos}
\end{figure}

\begin{table}[t]
\centering
\caption{Embedding entropy $\hat{I}(\mathbf{X};\mathbf{H}^t)$ at $t \in [140, 150]$ and their difference. The epoch at which training accuracy surpassed $98$\% of it's maximum value, is also given. Mean and standard deviation are computed over 5 trials.}
\label{tab:ar_eos}
\setlength{\tabcolsep}{3pt}
\begin{tabular}{c|cccc}
\toprule
$t_\mathrm{EOS}$ & $\hat{I}(\mathbf{X};\mathbf{H}^{140})$ & $\hat{I}(\mathbf{X};\mathbf{H}^{150})$ & $\Delta \hat{I}(\mathbf{X};\mathbf{H}^t)$  & Convergence epoch \\ 
\midrule
$0$ & $3.5 \pm 0.5$ & $3.5 \pm 0.5$ & $-0.06 \pm 0.01$ & $97 \pm 49$\\
$1$ & $5.2 \pm 0.7$ & $3.4 \pm 0.4$ & $-1.81 \pm 0.65$ & $53 \pm 14$\\
$10$ & $5.3 \pm 0.5$ & $2.5 \pm 0.4$ & $-2.87 \pm 0.40$& $59 \pm 22$
\end{tabular}
\end{table}

\paragraph{Alternating informative and uninformative phases}
In Section~\ref{sec:hypothesis}, we hypothesized that the compresssion constraint propagates to timesteps at which the LSTM is in a similar activation state. To investigate this claim, we modify the synthetic AR data-generation process by making randomly selected intervals uninformative: observations within these intervals are replaced with white noise. Although the locations of uninformative intervals vary randomly across training samples, the final 15 timesteps of each sequence are always either informative (\txtmode{informative end}) or uninformative (\txtmode{uninformative end}).

At evaluation time, we alternate between informative and uninformative phases every 25 timesteps. We consider both training conditions, with and without an EOS signal, using $t_\mathrm{EOS} \in \{0,25\}$ and sequence length $T=200$, yielding four experimental setups. For each setup, we measure the change in $\hat{I}(\mathbf{X};\mathbf{H}^t)$ and $\hat{I}(\mathbf{H}^t;Y)$ over each 25-timestep phase. That is, we compute $\Delta \hat{I}(\mathbf{X};\mathbf{H})$ and $\Delta \hat{I}(\mathbf{H};Y)$ from the onset to the offset of a phase. We exclude the first two phases and the final phase from this analysis.

Fig.~\ref{fig:alternation} compares the resulting information dynamics for networks trained with an \txtmode{informative end} and an \txtmode{uninformative end}. It also shows scatter plots and the aggregate mean and standard deviation of the information changes. Two effects are particularly clear. First, networks trained with an \txtmode{informative end} compress during informative phases, whereas networks trained with an \txtmode{uninformative end} exhibit the opposite behavior. Second, adding an EOS token substantially reduces compression during informative phases for networks trained with an \txtmode{informative end}. These results therefore strongly support the conjecture that constraint propagation to stochastically similar periods is responsible for prolonged compression.

\begin{figure}[t]
    \centering
\raisebox{-0.5mm}{%
    \includegraphics[width=0.60\linewidth]{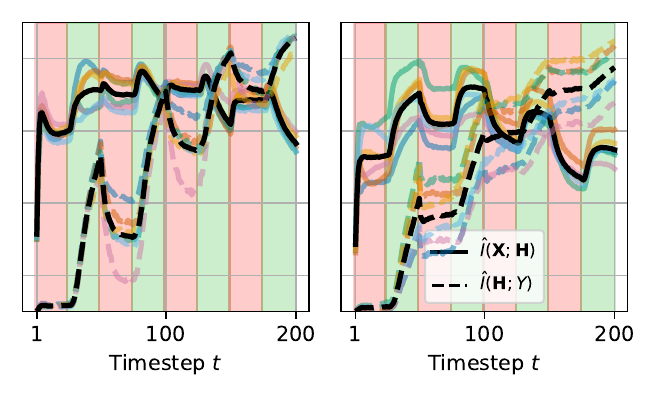}%
}
\includegraphics[width=0.38\linewidth]{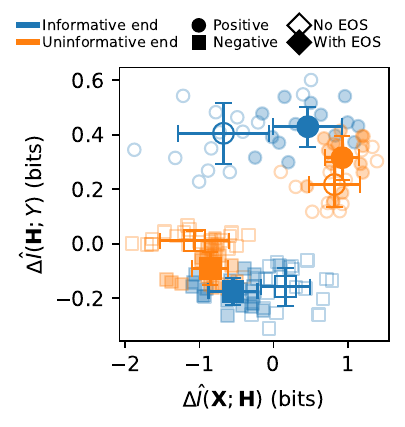}
    \caption{Information content behavior of a network trained and tested on AR dataset with alternating information. \textbf{Left} Embedding entropy $\hat{I}(\mathbf{X};\mathbf{H}^t)$ and target information $\hat{I}(\mathbf{H}^t; Y)$ over timesteps $t$ when training on \txtmode{informative end} and no EOS signal. Shown are the first 5 trials, with their average displayed in black. \textbf{Middle:} Same, but trained on \txtmode{uninformative end}. \textbf{Right:} Change in information content between the beginning and end of each phase, excluding the first two and final phases of each trial. Error bars indicate the mean and standard deviation for each color–symbol combination.}
    \label{fig:alternation}
\end{figure}

\subsection{Effect on Expressivity}
Finally, we examine how compression interacts with network capacity and generalization. To this end, we train LSTMs with one and two recurrent layers, with 10 neurons per layer, both without and with an EOS signal ($t_{\mathrm{EOS}} \in \{0,1\}$), on a synthetic AR dataset with $T=30$ we refer to as \dataset{AR short} and three real-world datasets. For statistical analysis, each configuration is trained 30 times with different weight initializations. Since unfavorable initializations can cause training to terminate prematurely due to numerical issues, we discard the three trials with the lowest maximum validation accuracy across all epochs, leaving 27 trials for each configuration.

Table~\ref{tab:eos_accuracy_K27} summarizes test set accuracies across the 27 trials. To investigate the relationship between embedding compression and generalization, Table~\ref{tab:eos_avg_pvalue_K27} reports Pearson correlations between test accuracy and time-averaged embedding entropy $1/T\sum_{t=1}^T \hat{I}(\mathbf{X}, \mathbf{H}^t)$ across 54 trials, pooling trials with and without the EOS signal. Time-averaged embedding entropy allows us to capture the cumulative effect of compression, as our hypothesis predicts that compression across many timesteps constrains the network's expressiveness.

Across datasets, higher average embedding entropy is generally associated with better generalization, although not all correlations are statistically significant. Interestingly, the effect of the EOS signal does not always align with this trend. On \dataset{Tiselac}, networks perform better without the EOS signal, whereas on \dataset{ElectricDevices}, they perform better with it. Yet, for the latter dataset, we find no evidence of a positive association between average embedding entropy and test accuracy, at least for networks with a single recurrent layer. These results suggest that the EOS signal is not a reliable mechanism for implicitly reducing compression pressure (see~\cite[Table~\ref{tab:eos_xmiavg_pvalue_K27}]{Kopp_LSTM_arXiv}). Note that these datasets are also those on which the networks perform poorly. The conflicting results may therefore arise because the networks are already severely underparameterized.

An alternative explanation for the improved generalization with EOS on \dataset{ElectricDevices}, despite the absence of a corresponding effect on compression, is that EOS may effectively increase the network's usable capacity. The recurrent embeddings $\mathbf{h}^t$ serve a dual purpose: providing context for subsequent timesteps and, at the final timestep, providing the penultimate layer representation for classification. The EOS signal may allow the network to decouple these roles, enabling it to use its recurrent embeddings differently for information propagation and the final classification task. This mechanism may also contribute to the improved generalization observed on \dataset{MNIST stroke} and \dataset{AR short}, suggesting that the benefits of EOS cannot be attributed solely to reduced overall compression. This does not, however, diminish the broader evidence for a link between embedding entropy and generalization across datasets and trials, as shown in Table~\ref{tab:eos_avg_pvalue_K27}.

\begin{table}[t]
\centering
\caption{Generalization performance of LSTM networks across four experimental setups for each dataset, combining the presence or absence of an EOS signal with either one or two LSTM layers. Shown are the mean and standard deviation of the accuracy over the best-performing 90\% of trials (27 of 30).}\label{tab:eos_accuracy_K27}
\begin{tabular}{lccc}
\toprule
Dataset & EOS & 1 Layer & 2 Layers \\
\midrule
\multirow{2}{*}{\dataset{MNIST stroke}} & $\times$ & $90.69 \pm 0.58$ & $93.99 \pm 0.42$ \\
 & $\checkmark$ & $91.12 \pm 0.64$ & $94.07 \pm 0.30$ \\
\midrule
\multirow{2}{*}{\dataset{Tiselac}} & $\times$ & $82.27 \pm 0.86$ & $82.11 \pm 0.87$ \\
 & $\checkmark$ & $81.11 \pm 1.29$ & $82.11 \pm 0.89$ \\
\midrule
\multirow{2}{*}{\dataset{ElectricDevices}} & $\times$ & $60.83 \pm 2.87$ & $65.38 \pm 2.12$ \\
 & $\checkmark$ & $61.43 \pm 3.15$ & $66.99 \pm 2.45$ \\
\midrule
\multirow{2}{*}{\dataset{AR short}} & $\times$ & $96.66 \pm 0.34$ & $97.30 \pm 0.36$ \\
 & $\checkmark$ & $97.09 \pm 0.25$ & $97.47 \pm 0.27$ \\
\bottomrule
\end{tabular}
\end{table}

\begin{table}[t]
\centering
\caption{Pearson correlations between test accuracy and time-averaged embedding entropy $1/T\sum_{t=1}^T \hat{I}(\mathbf{X}, \mathbf{H}^t)$, computed across all 54 of 60 trials, irrespective of whether an EOS signal was present. $p$-values $<0.05$ are shown in bold.}
\label{tab:eos_avg_pvalue_K27}
\begin{tabular}{lcccc}
\toprule
Dataset & \multicolumn{2}{c}{1 Layer} & \multicolumn{2}{c}{2 Layers} \\
\cmidrule(lr){2-3} \cmidrule(lr){4-5}
 & $r$ & $p$-value & $r$ & $p$-value \\
\midrule
\dataset{MNIST stroke} & 0.44 & $\mathbf{7.8 \times 10^{-4}}$ & 0.15 & $2.6 \times 10^{-1}$ \\
\midrule
\dataset{Tiselac} & 0.45 & $\mathbf{6.9 \times 10^{-4}}$ & 0.45 & $\mathbf{5.1 \times 10^{-4}}$ \\
\midrule
\dataset{ElectricDevices} & 0.03 & $8.3 \times 10^{-1}$ & 0.31 & $\mathbf{2.3 \times 10^{-2}}$ \\
\midrule
\dataset{AR short} & 0.39 & $\mathbf{3.8 \times 10^{-3}}$ & 0.07 & $5.9 \times 10^{-1}$ \\
\bottomrule
\end{tabular}
\end{table}

\section{Conclusion}
In this paper, we identify previously unreported compression dynamics across timesteps in discretized LSTMs and connect them with class separability and the interplay of minimizing the cross-entropy loss of a softmax classifier over a finite embedding space. We further show that prolonged compression, consistent with our hypothesis that (explicit or implicit) learning objectives propagate to timesteps with similar activation patterns, can impair generalization when network capacity is already constrained. Because our networks are underparameterized, these findings do not contradict prior work linking compression to improved generalization. Instead, they may be consistent with the view that compression has a beneficial regularizing effect when sufficient model capacity is available.

\section*{Acknowledgment}
ChatGPT (free web version) and ChatGPT 5.6 Terra (AcademicAI) were used to assist with language editing, improve the clarity and conciseness of the manuscript, develop data-visualization scripts, support literature research and assist in developing and refining theory. All AI-assisted outputs were reviewed and verified by the authors.

\bibliographystyle{IEEEtran}
\bibliography{references}

\vspace{12pt}

\clearpage

\appendices

\section{Penultimate embedding $\mathbf{H}^T$ collapse}
We first explain the straight-through estimator (STE)~\cite{bengio2013estimating}, a technique for applying gradient-based optimization to networks with non-differentiable components. We then present theoretical results on the effects of training a classifier on the geometry of quantized embeddings in the penultimate layer. 

\subsection{Straight-through estimator}
Gradient-based optimization can fail when gradients are undefined or uninformative. Networks with discrete components, such as activations based on the $\operatorname{sign}(\cdot)$ function, exhibit both issues: the gradient is zero almost everywhere and undefined at the discontinuity. However, Bengio et al.~\cite{bengio2013estimating} observed that when writing the gradient w.r.t. the weights as a chain of partial derivatives --- which, in effect, is exactly what is computed during backpropagation --- the only problematic derivatives are those of discrete components, such as $\partial \operatorname{sign}(\tilde{\mathbf{h}})/\partial \tilde{\mathbf{h}}$. They therefore proposed to replace these derivatives by surrogates, a heuristic technique they coined \emph{straight-through estimator}.

We now formalize this idea in the setting considered in this paper. Let us write a NN for classification as a concatenation of two functions $\hat{f}^o(\hat{f}^e(\mathbf{x})) = y$, where we term $\hat{f}^e{:}\ \mathcal{X} \to \mathcal{H} \subset \mathbb{R}^d$ the \emph{encoder} that produces bounded penultimate layer pre-embeddings $\tilde{\mathbf{h}}$, $\hat{f}^o{:}\ \mathcal{H} \to \Delta^{C-1}$ is an output layer that maps embeddings to class probabilities, $\mathcal{X}$ is the input space and $\mathcal{H}$ is a bounded set.\footnote{In our setting, $\mathcal{X} = \bigcup_{T\in\mathbb{N}} \mathbb{R}^{T\times F}$ and $\tilde{\mathbf{h}}$ is the final timestep embedding $\tilde{\mathbf{h}}^T$. We adopt a more general formulation in this section, since the analysis also applies to penultimate embeddings in feed-forward architectures.} Let us further denote the per-sample loss defined on $\tilde{\mathbf{h}}$ for a fixed output layer by $\ell_y{:}\  \mathcal{H} \to \mathbb{R}_+$, such that 
\begin{equation}
    \mathcal{L}(\{\mathbf{x}_n, y_n\}_{n=1}^N) = \frac{1}{N} \sum_{n=1}^N \ell_{y_n}(\hat{f}^e(\mathbf{x}_n)) = \frac{1}{N} \sum_{n=1}^N \ell_{y_n}(\tilde{\mathbf{h}}_n).
\end{equation}
Note that the input space of the output layer $\hat{f}^o(\cdot)$ and per-sample loss $\ell_y(\cdot)$ is $\mathcal{H} \subset \mathbb{R}^d$ and is therefore continuous. We now examine how gradient descent updates affect the pre-embeddings $\tilde{\mathbf{h}}$. Given a sample $\mathbf{x}_n$, backpropagation first computes the gradient of the loss with respect to its pre-embedding, $\partial \ell_y(\tilde{\mathbf{h}}_n)/\partial \tilde{\mathbf{h}}_n$, before propagating this gradient through the encoder $\hat{f}^e(\cdot)$ to update its parameters. In this section, however, we are not interested in the encoder itself, but in the direction in which gradient-based optimization seeks to move $\tilde{\mathbf{h}}_n$. We therefore consider updates directly in pre-embedding space, which can be interpreted as assuming an arbitrarily powerful encoder $\hat{f}^e(\cdot)$ which we can instruct to realize any mapping. The update is
\begin{equation}
\label{eq:backprop}
\Delta \tilde{\mathbf{h}}_n
= -\eta \left.
\frac{\partial \ell_y(\tilde{\mathbf{h}})}
{\partial \tilde{\mathbf{h}}}
\right|_{\tilde{\mathbf{h}}=\tilde{\mathbf{h}}_n}  \coloneqq -\eta \nabla \ell_{y_n}(\tilde{\mathbf{h}}_n).
\end{equation}
In the remainder of this section, we will drop the sample index $n$, since we are interested in the update of any $\mathbf{h} \in \mathcal{H} \subset \mathbb{R}^d$ given $y$. We now insert a quantizer $\mathcal{Q}{:}\ \mathbb{R}^d \to \{-1, 1\}^d \subset \mathbb{R}^d$ between encoder and output layer, yielding the function composition $\hat{f}^o \circ \mathcal{Q} \circ \hat{f}^e$.\footnote{Note that while the quantizer restricts the set of possible inputs to the output layer, the ambient space is still $\mathbb{R}^d$. In other words, the output layer $\hat{f}^o$ is still defined on $\mathbb{R}^d$.} Specifically, we define
\begin{equation}
\mathcal{Q}(\tilde{\mathbf{h}})_i \coloneqq \begin{cases}
-1, & \tilde{h}_i < 0,\\
1, & \tilde{h}_i \geq 0,
\end{cases}
\qquad i=1,\ldots,d,
\end{equation}
and write $\operatorname{sign}(\cdot)$ as shorthand for this component-wise operation. The quantizer renders Equation~\eqref{eq:backprop} useless, since the derivative of the composition $\ell_y \circ \mathcal{Q}(\tilde{\mathbf{h}})$ w.r.t. $\tilde{\mathbf{h}}$ is undefined at boundaries and zero elsewhere. To fix this, a component-wise surrogate derivative
\begin{equation}
\frac{\partial \mathcal{Q}(\tilde{\mathbf{h}})_i}{\partial \tilde{h}_i}
\overset{\mathrm{STE}}{:=} \rho(\tilde{h}_i),
\qquad \rho(\tilde{h}_i) > 0,
\end{equation}
also known as STE, is introduced. The update with the quantizer can now be written as
\begin{equation}
\label{eq:backprop_ste}
\Delta \tilde{\mathbf{h}} = -\eta \boldsymbol\rho(\tilde{\mathbf{h}}) \odot \nabla \ell_y(\operatorname{sign}(\tilde{\mathbf{h}})).
\end{equation}
When having many quantization levels, a typical choice for the surrogate is  $\rho(\cdot)=1$, which can be interpreted as treating the quantizer as the identity function when computing the gradient and hence known as \emph{identity-STE}. However, since we just have two quantization levels, we will use the derivative of $\operatorname{tanh}$ as $\rho$. Fig.~\ref{fig:ste} illustrates the updates described in Equations~\eqref{eq:backprop} and~\eqref{eq:backprop_ste}.

\begin{figure}[t]
    \centering
    \begin{subfigure}[b]{0.48\linewidth}
        \centering
        \includegraphics[width=\linewidth]{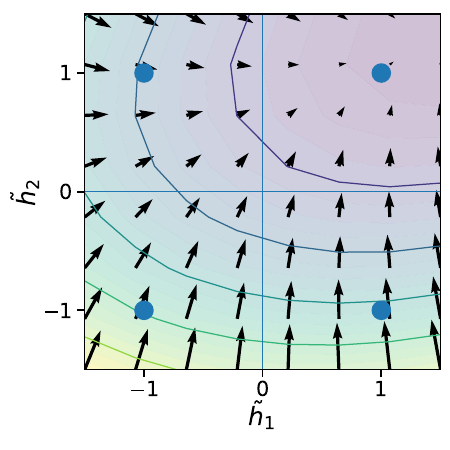}
        \caption{}
        \label{fig:a}
    \end{subfigure}
    \hfill
    \begin{subfigure}[b]{0.48\linewidth}
        \centering
        \includegraphics[width=\linewidth]{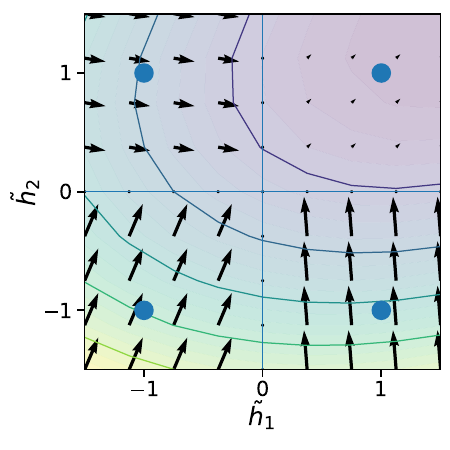}
        \caption{}
        \label{fig:b}
    \end{subfigure}

    \begin{subfigure}[b]{0.48\linewidth}
        \centering
        \includegraphics[width=\linewidth]{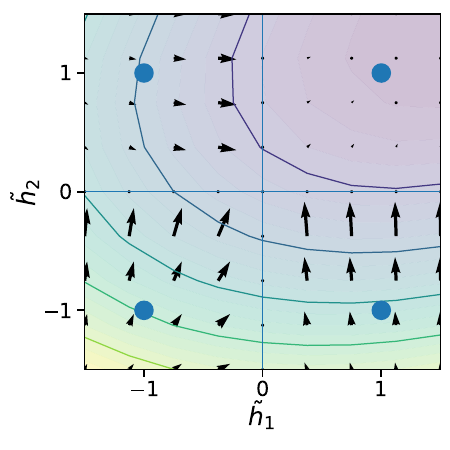}
        \caption{}
        \label{fig:c}
    \end{subfigure}
    \hfill
    \begin{subfigure}[b]{0.48\linewidth}
        \centering
        \includegraphics[width=\linewidth]{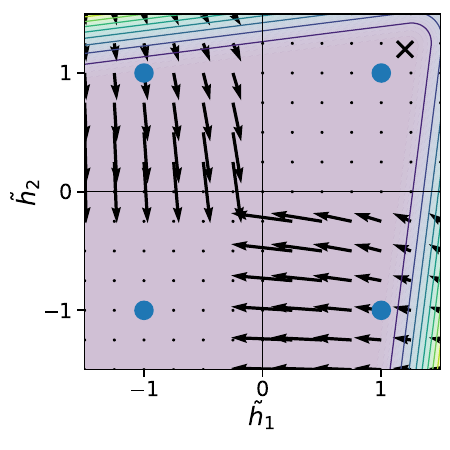}
        \caption{}
        \label{fig:d}
    \end{subfigure}

    \caption{Illustration of the straight-through estimator. The elliptical contour plot shows the level sets of the loss, and the arrows display the negative gradient field. \textbf{(a)} shows the updates when no quantizer is used, \ie Equation~\eqref{eq:backprop}. \textbf{(b)} displays the STE updates given by Equation~\ref{eq:backprop_ste}, using the derivative of the identity function as the surrogate $\rho(\cdot)$. Note that the update is fully determined by the gradient at the quantized values, namely the vertices of the square. Even though the minimum lies in the first quadrant, updates in the second quadrant may flip $h_2 = \operatorname{sign}(\tilde{h}_2)$, while updates in the fourth quadrant may flip $h_1 = \operatorname{sign}(\tilde{h}_1)$. In either case, such flips can increase distance to the loss-minimizing quadrant. Thus, STE updates are not guaranteed to eventually reach the loss-minimizing vertex $\mathbf{h}=(1,1)$. \textbf{(c)} shows the STE updates when using the derivative of $\operatorname{tanh}$ as the surrogate $\rho(\cdot)$. Note that vectors near the quantization boundaries are pulled towards them, as the derivative of $\operatorname{tanh}$ is largest near $0$. The vector field can be transformed into that of (b) by applying a strictly monotonic transformation to each coordinate. \textbf{(d)} shows an example where even though the loss is convex and STE moves us locally closer to the loss-minimizing orthant, we are not guaranteed to get closer with discrete steps.}
    \label{fig:ste}
\end{figure}

\subsection{Theoretical results}
\label{sec:theory}
\begin{definition}[Vertex loss difference]
Let $\mathcal{W}$ denote the 
network parameters. For every class $y$ and every pair of distinct vertices $\mathbf v,\mathbf v'\in\{-1,+1\}^d$, we define
\begin{equation}
D_{y,\mathbf v,\mathbf v'}(\mathcal{W}) \coloneqq \ell_{y,\mathcal{W}}(\mathbf v) - \ell_{y,\mathcal{W}}(\mathbf v').
\end{equation}
\end{definition}

\begin{assumption}[Non-degeneracy of vertex losses]
\label{ass:generic-ties}
For every class $y$ and every pair of distinct vertices $\mathbf v,\mathbf v'$, the function $D_{y,\mathbf v,\mathbf v'}$ is real analytic and not identically zero. 
\end{assumption}

\begin{lemma}[Uniqueness of loss-minimizing vertex $\mathbf v^y$]
\label{lem:unique-codeword}
Under Assumption~\ref{ass:generic-ties}, for every class $y$, the minimizer
$$
\mathbf v^y
\in
\argmin_{\mathbf v\in\{-1,+1\}^d}
\ell_{y,\boldsymbol\theta}(\mathbf v)
$$
is unique for almost all network parameters $\mathcal{W}$.
\end{lemma}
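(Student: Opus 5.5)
The plan is to use the classical fact that the zero set of a real analytic function that is not identically zero has Lebesgue measure zero. We then take a finite union over all possible ties. Throughout, the parameter space is identified with an open connected subset of $\mathbb{R}^P$; we take it to be all of $\mathbb{R}^P$, which is connected, and on which the maps $\mathcal{W}\mapsto D_{y,\mathbf v,\mathbf v'}(\mathcal{W})$ are real analytic by Assumption~\ref{ass:generic-ties}. Here "almost all" means all parameters outside a Lebesgue null set.

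\textbf{Step 1: reduce non-uniqueness to ties.} Fix a class $y$. Since $\{-1,+1\}^d$ is finite, the argmin of $\mathbf v\mapsto \ell_{y,\mathcal{W}}(\mathbf v)$ is nonempty for every $\mathcal{W}$. If it is not a singleton, there exist distinct $\mathbf v\neq\mathbf v'$ with $\ell_{y,\mathcal{W}}(\mathbf v)=\ell_{y,\mathcal{W}}(\mathbf v')$, i.e.\ $D_{y,\mathbf v,\mathbf v'}(\mathcal{W})=0$. Hence the bad set is contained in a union of zero sets:
\begin{equation*}
B_y \coloneqq \{\mathcal{W}: \text{argmin not unique}\}\subseteq \bigcup_{\mathbf v\neq \mathbf v'} Z_{y,\mathbf v,\mathbf v'},\qquad Z_{y,\mathbf v,\mathbf v'}\coloneqq D_{y,\mathbf v,\mathbf v'}^{-1}(0).
\end{equation*}

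\textbf{Step 2: each zero set is null.} This is the one step with real content, and I would invoke it as a known lemma: a real analytic function $f$ on a connected open set $U\subseteq\mathbb{R}^P$ that is not identically zero has $\lambda(f^{-1}(0))=0$. If a proof sketch is wanted, I would argue by induction on $P$. For $P=1$, the zeros are isolated by the identity theorem, hence countable. For the inductive step, write $\mathcal{W}=(w_1,w')$ and apply Fubini. For almost every $w'$ the slice $w_1\mapsto f(w_1,w')$ is not identically zero; otherwise all $w_1$-derivatives would vanish on a positive-measure set of $w'$, and by the inductive hypothesis applied to those derivatives this would force $f\equiv0$. Each such slice then has a null zero set. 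The main obstacle is making this connectedness/identity-theorem argument rigorous, and it is exactly why Assumption~\ref{ass:generic-ties} requires analyticity and non-vanishing rather than mere continuity. I would cite the standard result (e.g.\ Mityagin's note) rather than reprove it.

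\textbf{Step 3: finite union.} There are $C$ classes and at most $2^d(2^d-1)$ ordered pairs of vertices. The union $\bigcup_y B_y$ is therefore contained in a finite union of null sets, so it is null. Outside this set, the minimizer $\mathbf v^y$ is unique simultaneously for every $y$, which proves Lemma~\ref{lem:unique-codeword}.
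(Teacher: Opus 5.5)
Your proof is correct and follows the paper's argument: a non-unique minimizer forces a tie $D_{y,\mathbf v,\mathbf v'}(\mathcal{W})=0$, each such zero set of a nonzero real-analytic function is Lebesgue-null, and the finite union over vertex pairs is null. Your additions only tighten the same route. These are the explicit connectedness of the parameter domain, which the paper uses implicitly; the sketch of the null-zero-set fact; and the union over classes, which gives simultaneous uniqueness for all $y$.
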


\begin{proof}
Fix a class $y$ and two distinct vertices
$$
\mathbf v,\mathbf v'\in\{-1,+1\}^d.
$$
A tie between these vertices occurs if and only if
$$
D_{y,\mathbf v,\mathbf v'}(\boldsymbol\theta)=0.
$$
By Assumption~\ref{ass:generic-ties},
$D_{y,\mathbf v,\mathbf v'}$ is a nonzero real-analytic function.
Its zero set therefore has Lebesgue measure zero. 

There are finitely many pairs of distinct vertices in
$\{-1,+1\}^d$. The proof is completed by the fact that a finite union of null sets is a null set.
%
\end{proof}

\begin{definition}[Loss-minimizing vertex, orthants and quantization boundary]
Let
$$
\mathcal V:=\{-1,+1\}^d
$$
and 
$$
\mathbf v^y
\in
\arg\min_{\mathbf v\in \mathcal{V}}
\ell_{y,\boldsymbol\theta}(\mathbf v)
$$
be a loss-minimizing vertex.

For each $\mathbf v\in\mathcal V$, define the open orthant
$$
\mathcal O_{\mathbf v}
\coloneqq
\left\{
\tilde{\mathbf h}\in\mathbb R^d:
\operatorname{sign}(\tilde{\mathbf h})=\mathbf v
\right\}
=
\left\{
\tilde{\mathbf h}\in\mathbb R^d:
\mathbf v\odot\tilde{\mathbf h}\succ\mathbf 0
\right\}.
$$
The quantization boundary is
$$
\mathcal B
\coloneqq
\bigcup_{j=1}^d
\left\{
\tilde{\mathbf h}\in\mathbb R^d:
\tilde h_j=0
\right\}.
$$
\end{definition}

\begin{proposition}[Local STE flow towards loss-minimizing orthant]
\label{lem:global-ste-progress}
Let $\ell_y{:}\ \mathbb R^d\to\mathbb R$ be differentiable and convex, and
let $\mathbf v^y\in\{-1,+1\}^d$ be its unique minimizer over
$\{-1,+1\}^d$. Suppose that the surrogate gradient is applied
componentwise and satisfies
$$
0<\epsilon\leq\rho_j(s)\leq C<\infty,
\qquad
j\in\{1,\ldots,d\},\ s\in\mathbb R.
$$
Then the surrogate-STE flow
$$
\dot{\tilde{\mathbf h}}
=
-\boldsymbol\rho(\tilde{\mathbf h})
\odot
\nabla\ell_y\left(\operatorname{sign}(\tilde{\mathbf h})\right)
$$
locally moves towards the loss-minimizing orthant
$\mathcal O_{\mathbf v^y}$ at every
$\tilde{\mathbf h}\notin\mathcal B$ satisfying
$\operatorname{sign}(\tilde{\mathbf h})\neq\mathbf v^y$.
\end{proposition}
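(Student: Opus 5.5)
The plan is to make ``locally moves towards $\mathcal O_{\mathbf v^y}$'' precise through a first-order convexity inequality at the current vertex, and then to remove the surrogate weights $\rho_j$ by a monotone change of coordinates. This is the reparametrisation mentioned in the caption of Fig.~\ref{fig:ste}, which turns panel (c) into panel (b).

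\textbf{Step 1: convexity gives a positive sum.} Fix $\tilde{\mathbf h}\notin\mathcal B$ with $\mathbf v\coloneqq\operatorname{sign}(\tilde{\mathbf h})\neq\mathbf v^y$. Since $\tilde{\mathbf h}\notin\mathcal B$, a whole neighbourhood of $\tilde{\mathbf h}$ lies in $\mathcal O_{\mathbf v}$, so the flow there has the constant direction field $-\boldsymbol\rho\odot\mathbf g$ with $\mathbf g\coloneqq\nabla\ell_y(\mathbf v)$. Let $M\coloneqq\{j: v_j\neq v^y_j\}\neq\emptyset$ be the set of mismatched coordinates, so $\mathbf v^y-\mathbf v=-2\sum_{j\in M}v_j\mathbf e_j$. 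Convexity and differentiability give $\ell_y(\mathbf v^y)\geq \ell_y(\mathbf v)+\mathbf g^\top(\mathbf v^y-\mathbf v)$. Uniqueness of the minimiser over the vertices gives $\ell_y(\mathbf v^y)<\ell_y(\mathbf v)$. Combining the two,
\begin{equation*}
\sum_{j\in M} v_j g_j \;\geq\; \tfrac12\bigl(\ell_y(\mathbf v)-\ell_y(\mathbf v^y)\bigr)\;>\;0 .
\end{equation*}
So the mismatched coordinates are, on aggregate, pushed toward zero, i.e.\ toward sign flips into $\mathcal O_{\mathbf v^y}$.

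\textbf{Step 2: remove the weights $\rho_j$.} In the original coordinates the velocity is $-\rho_j g_j$, and the weights $\rho_j(\tilde h_j)$ differ across coordinates. A weighted version of the sum in Step 1 need not be positive, so the Euclidean distance to $\mathcal O_{\mathbf v^y}$ is not guaranteed to decrease. I expect this to be the main obstacle; it is why the notion of ``moving towards'' has to be chosen carefully. I would handle it with the change of variables $u_j=R_j(\tilde h_j)\coloneqq\int_0^{\tilde h_j}\rho_j(s)^{-1}\,ds$.
\begin{itemize}
\item The bounds $\epsilon\leq\rho_j\leq C$ make each $R_j$ a strictly increasing bi-Lipschitz homeomorphism of $\mathbb R$ with $R_j(0)=0$.
\item Consequently $\mathbf R$ preserves every orthant and the boundary $\mathcal B$, so ``towards $\mathcal O_{\mathbf v^y}$'' has the same meaning in both coordinate systems.
\item In the new coordinates the flow is simply $\dot{\mathbf u}=-\mathbf g$, which is constant on $\mathcal O_{\mathbf v}$. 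This is the identity-STE field.
\end{itemize}

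\textbf{Step 3: conclude.} In $\mathbf u$-coordinates, Step 1 yields $\dot{\mathbf u}^\top(\mathbf v^y-\mathbf v)=2\sum_{j\in M}v_jg_j>0$. Equivalently, the linear potential $\Phi(\mathbf u)\coloneqq\sum_{j\in M}v_ju_j$ strictly decreases along the flow, at rate at least $\tfrac12(\ell_y(\mathbf v)-\ell_y(\mathbf v^y))$. Note that $\Phi$ sums the signed magnitudes of the coordinates that must change sign, and $\Phi\le0$ exactly on the closure of the orthants agreeing with $\mathbf v^y$ on $M$. Hence the velocity has strictly positive component along the direction from the current vertex to $\mathbf v^y$, which is the precise sense of ``locally moves towards $\mathcal O_{\mathbf v^y}$''.

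Pulling back through $\mathbf R$ transfers this statement to the original coordinates. Because $\mathbf R$ is bi-Lipschitz, the decrease rate degrades by at most the factor $C/\epsilon$. Finally, I would remark that the claim is only local, valid on the current orthant: once a coordinate crosses $\mathcal B$ the vertex $\mathbf v$ and the field $\mathbf g$ change, consistent with panel (d) of Fig.~\ref{fig:ste}.
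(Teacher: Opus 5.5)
Your proof is correct and is essentially the paper's. It uses the same reparametrization $\int_0^{\tilde h_j}\rho_j(s)^{-1}\,ds$ to reduce the flow to identity-STE and the same convexity-plus-uniqueness inequality $\nabla\ell_y(\mathbf v)^\top(\mathbf v^y-\mathbf v)<0$, and your potential $\Phi$ coincides on the current orthant with the paper's $L_1$ distance $d_y$ to $\mathcal O_{\mathbf v^y}$; you additionally extract the explicit rate $\tfrac12(\ell_y(\mathbf v)-\ell_y(\mathbf v^y))$.

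Two side remarks in Step 3 are inaccurate, though neither is needed for the proof:
\begin{itemize}
\item The set $\{\Phi\le0\}$ is a half-space strictly larger than the closure of the orthants agreeing with $\mathbf v^y$ on $M$ once $|M|\ge2$.
\item Bi-Lipschitzness only gives $\epsilon\,\Phi(\mathbf R(\tilde{\mathbf h}))\le\sum_{j\in M}|\tilde h_j|\le C\,\Phi(\mathbf R(\tilde{\mathbf h}))$ on $\mathcal O_{\mathbf v}$. It does not give a degraded but still positive decrease rate for distances measured in the original coordinates; as your own Step 2 notes, those need not decrease at all.
\end{itemize}
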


\begin{proof}
Define the coordinate transformation
$$
\tilde z_j
=
\gamma_j(\tilde h_j)
\coloneqq
\int_0^{\tilde h_j}\frac{ds}{\rho_j(s)}.
$$
The assumptions on $\rho_j$ imply that $\gamma_j$ is a strictly
increasing bijection of $\mathbb R$ and that
$$
\operatorname{sign}(\tilde z_j)
=
\operatorname{sign}(\tilde h_j).
$$
By the chain rule,
$$
\dot{\tilde z}_j
=
\frac{1}{\rho_j(\tilde h_j)}\dot{\tilde h}_j
=
-\left[
\nabla\ell_y\left(\operatorname{sign}(\tilde{\mathbf z})\right)
\right]_j.
$$
Thus, in $\tilde{\mathbf z}$ coordinates, the flow is identity-STE. Although the transformation need not preserve the numerical locations
$\pm1$ in pre-embedding space, this is immaterial: the binary vertices
$\mathbf v\in\{-1,+1\}^d$ are codewords in quantized-output space, and
$\operatorname{sign}(\tilde{\mathbf z})
=\operatorname{sign}(\tilde{\mathbf h})$. Hence the transformation
preserves both orthant labels and the loss value associated with each
orthant.

Let $\mathbf v=\operatorname{sign}(\tilde{\mathbf z})$ denote the vertex
associated with the orthant containing $\tilde{\mathbf z}$. We define
$$
\mathcal I(\tilde{\mathbf z})
=
\left\{
i\in\{1,\ldots,d\}:
\operatorname{sign}(\tilde z_i)\neq v_i^y
\right\}.
$$
Thus, $\mathcal I(\tilde{\mathbf z})$ is the set of coordinates in which
the current orthant disagrees with the loss-minimizing orthant
$\mathcal O_{\mathbf v^y}$. The $L_1$ distance from
$\tilde{\mathbf z}$ to $\mathcal O_{\mathbf v^y}$ is therefore
$$
d_y(\tilde{\mathbf z})
\coloneqq
\sum_{i\in\mathcal I(\tilde{\mathbf z})}
-v_i^y\tilde z_i.
$$
Indeed, for every $i\in\mathcal I(\tilde{\mathbf z})$, the signs of
$\tilde z_i$ and $v_i^y$ disagree, so that
$-v_i^y\tilde z_i=|\tilde z_i|$ is precisely the distance that the
$i$\textsuperscript{th} coordinate must travel to reach the corresponding quantization
boundary.

For convenience, for any index set $\mathcal I$, write
$$
\langle\mathbf a,\mathbf b\rangle_{\mathcal I}
\coloneqq
\sum_{i\in\mathcal I}a_i b_i.
$$
This notation allows us to rewrite the preceding sum as
$$
d_y(\tilde{\mathbf z})
=
\left\langle-\tilde{\mathbf z},\mathbf v^y\right\rangle_
{\mathcal I(\tilde{\mathbf z})}.
$$

For $\mathbf v\neq\mathbf v^y$, uniqueness and convexity imply
$$
\nabla\ell_y(\mathbf v)^\top(\mathbf v^y-\mathbf v)<0,
$$
or, equivalently,
$$
\left\langle\nabla\ell_y(\mathbf v),\mathbf v^y\right\rangle_
{\mathcal I(\tilde{\mathbf z})}<0.
$$
Therefore, within every non-optimal orthant,
$$
\frac{d}{dt}d_y(\tilde{\mathbf z}(t))
=
\left\langle\nabla\ell_y(\mathbf v),\mathbf v^y\right\rangle_
{\mathcal I(\tilde{\mathbf z})}
<0.
$$
\end{proof}

Therefore, the STE flow locally moves towards the loss-minimizing orthant $\mathcal{O}_{\mathbf{v}^y}$, except on the quantization boundaries $\mathcal{B}$; crossing such a boundary changes the index set $\mathcal{I}(\cdot)$ and thus the local STE dynamics (Fig.~\ref{fig:d}). 

Note that Proposition~\ref{lem:global-ste-progress} makes no statement about trajectories within $\mathcal{O}_{\mathbf{v}^y}$; in particular, our assumptions do not imply that the loss-minimizing vertex $\mathbf{v}^y$ is stable.
\begin{corollary}[STE dynamics of frozen softmax classifier]
\label{cor:frozen-softmax}
For a frozen softmax classifier with cross-entropy loss, surrogate-STE dynamics locally move every non-optimal penultimate layer pre-embedding $\tilde{\mathbf{h}}$ towards the unique loss-minimizing orthant, if $\tilde{\mathbf{h}} \notin \mathcal{B}$.
\end{corollary}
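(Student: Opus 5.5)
The plan is to reduce Corollary~\ref{cor:frozen-softmax} to Proposition~\ref{lem:global-ste-progress} by checking its hypotheses for a frozen softmax classifier, and to obtain uniqueness of the minimizing vertex from Lemma~\ref{lem:unique-codeword}. Write the frozen output layer as $\hat f^o(\mathbf h)=\operatorname{softmax}(\mathbf W\mathbf h+\mathbf b)$. The per-sample loss is
\[
\ell_y(\mathbf h)=-\mathbf w_y^\top\mathbf h-b_y+\log\sum_{c=1}^C\exp(\mathbf w_c^\top\mathbf h+b_c),
\]
where $\mathbf w_c^\top$ are the rows of $\mathbf W$.

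\textbf{Step 1: regularity and convexity.} The map $\mathbf h\mapsto\mathbf W\mathbf h+\mathbf b$ is affine. Log-sum-exp is smooth and convex, and convexity is preserved under affine precomposition. The term $-\mathbf w_y^\top\mathbf h-b_y$ is linear. Hence $\ell_y$ is differentiable and convex on all of $\mathbb R^d$. This matters because the proposition evaluates $\nabla\ell_y$ on the ambient space, not only at the vertices.

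\textbf{Step 2: the surrogate bounds.} The paper uses $\rho=\tanh'=1-\tanh^2$. This satisfies $\rho\le 1$, but $\inf_s\rho(s)=0$, so the uniform lower bound $\epsilon>0$ required by the proposition fails globally. I would handle this by localizing. Pre-embeddings live in the bounded set $\mathcal H$ (or in the layer-normalized range). On any bounded region $|s|\le R$ we have $\rho(s)\ge 1-\tanh^2(R)>0$. I can therefore replace $\rho$ by a modification that agrees with $\tanh'$ on $[-R,R]$ and is bounded below elsewhere. Since the claim is local at each $\tilde{\mathbf h}$, this does not change the flow near the point. Alternatively, one can note that the proof of the proposition only uses strict positivity of $\rho_j$ at the current point: $\gamma_j$ is a strictly increasing local diffeomorphism preserving signs, and $\frac{d}{dt}d_y$ has the same sign up to positive factors. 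I expect this step to be the main obstacle in making the argument rigorous, and the localization is the route I would try first.

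\textbf{Step 3: uniqueness of $\mathbf v^y$.} For each pair of vertices $\mathbf v\neq\mathbf v'$, the difference $D_{y,\mathbf v,\mathbf v'}(\mathbf W,\mathbf b)$ is a composition of linear maps, $\exp$ and $\log$ of a positive sum. It is therefore real analytic in the parameters. To show it is not identically zero, I would exhibit one parameter choice with $D\neq0$. Take $\mathbf b=0$, $\mathbf w_y=\mathbf v$, and all other $\mathbf w_c=0$. Then $\ell_y(\mathbf u)=-\mathbf v^\top\mathbf u+\log(e^{\mathbf v^\top\mathbf u}+C-1)$. This expression is strictly decreasing in $\mathbf v^\top\mathbf u$ when $C\ge2$, and $\mathbf v^\top\mathbf v=d>\mathbf v^\top\mathbf v'$, so $D<0$. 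Assumption~\ref{ass:generic-ties} thus holds, and Lemma~\ref{lem:unique-codeword} gives a unique minimizer for almost all frozen parameters. The corollary's phrase ``unique loss-minimizing orthant'' should be read in this generic sense.

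\textbf{Conclusion.} Applying Proposition~\ref{lem:global-ste-progress} for any non-optimal $\tilde{\mathbf h}\notin\mathcal B$ gives $\frac{d}{dt}d_y<0$. The surrogate-STE flow therefore locally moves $\tilde{\mathbf h}$ towards $\mathcal O_{\mathbf v^y}$.
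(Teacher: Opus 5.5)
Your proposal is correct and follows the same route as the paper. The paper's proof just observes that softmax cross-entropy is differentiable and convex in its input, then invokes Lemma~\ref{lem:unique-codeword} and Proposition~\ref{lem:global-ste-progress}. Your additional steps---checking Assumption~\ref{ass:generic-ties} for the softmax loss via an explicit witness parameter, and handling the fact that $\tanh'$ has no uniform positive lower bound---make explicit what the paper leaves implicit; the latter localization is harmless but not even needed, since $1/\tanh'=\cosh^2\ge 1$ is locally integrable, so the coordinate change in the proposition's proof is still a sign-preserving increasing bijection.
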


\begin{proof}
Cross-entropy applied to softmax is differentiable and convex in its input. The claim then follows from Lemma~\ref{lem:unique-codeword} and Proposition~\ref{lem:global-ste-progress}.
\end{proof}

\section{Datasets}\label{app:datasets}
\subsection{Preprocessing of \dataset{MNIST stroke}}
This dataset was created by algorithmically transforming images into sequences of pen strokes~\cite{dejong2016mniststrokes}. Most sequence elements specify the direction of pen movements, moving the pen by one pixel at a time. A special token allows larger movements without drawing, corresponding to repositioning the pen for a new stroke. Digits consist of one or multiple strokes, followed by an EOS token at the end of the sequence. We performed three preprocessing steps: (1) removing samples containing more than three strokes, (2) removing the EOS token, and (3) resampling the dataset to a common sequence length. Step (1) and (3) are explained below.

\paragraph{Outliers} The process used to generate pen-stroke sequences from images works well for most digits, but can fail catastrophically for some. This failure is accompanied by generating too many strokes (see for example the digits in Fig.~\ref{fig:mnist:b}, which consist of 5 strokes). We therefore remove all samples containing more than three strokes. Fig.~\ref{fig:mnist:a} shows the distribution of the number of strokes per sample. As can be seen, this heuristic removes only a small fraction of the dataset.

\begin{figure}[t]
    \centering
    \begin{subfigure}[b]{0.48\linewidth}
        \centering
        \includegraphics[width=\linewidth]{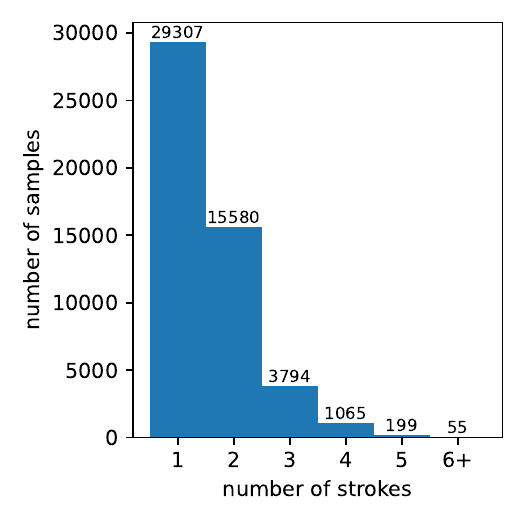}
        \caption{}
        \label{fig:mnist:a}
    \end{subfigure}
    \hfill
    \begin{subfigure}[b]{0.48\linewidth}
        \centering
        \includegraphics[width=\linewidth]{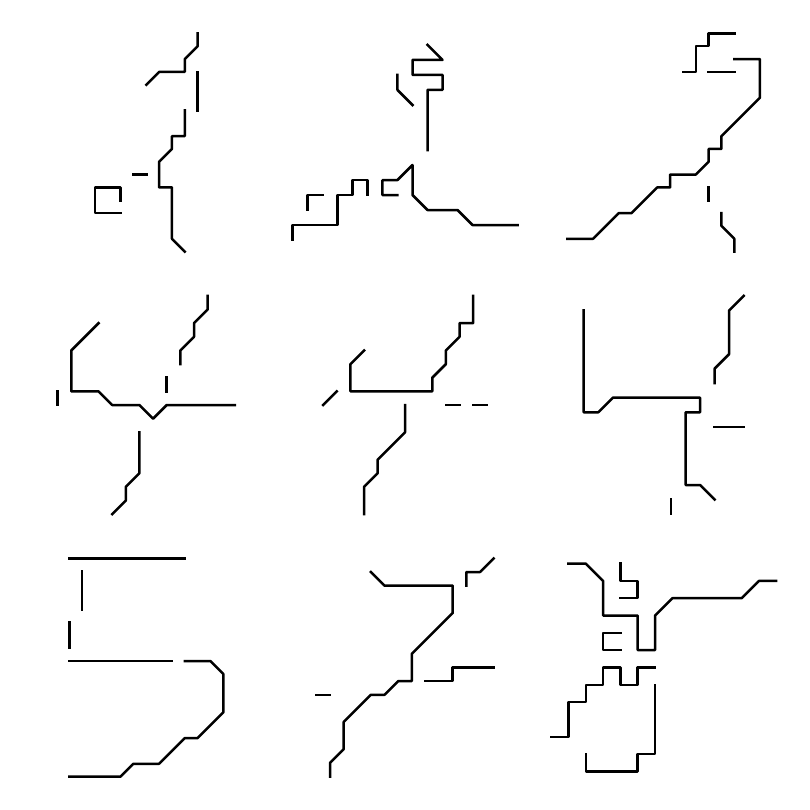}
        \caption{}
        \label{fig:mnist:b}
    \end{subfigure}

    \begin{subfigure}[b]{0.48\linewidth}
        \centering
        \includegraphics[width=\linewidth]{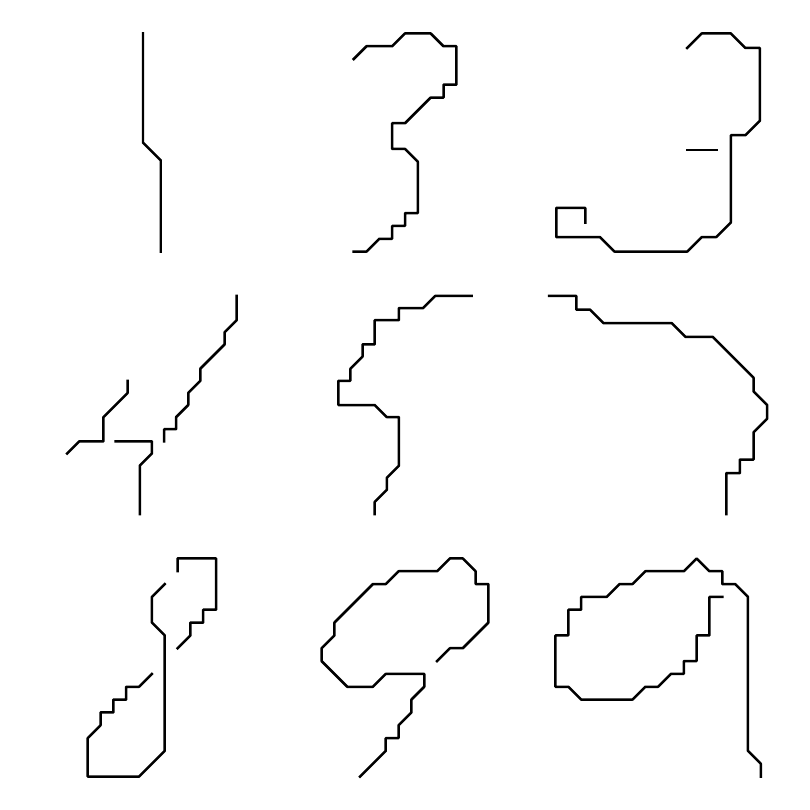}
        \caption{}
        \label{fig:mnist:c}
    \end{subfigure}
    \hfill
    \begin{subfigure}[b]{0.48\linewidth}
        \centering
        \includegraphics[width=\linewidth]{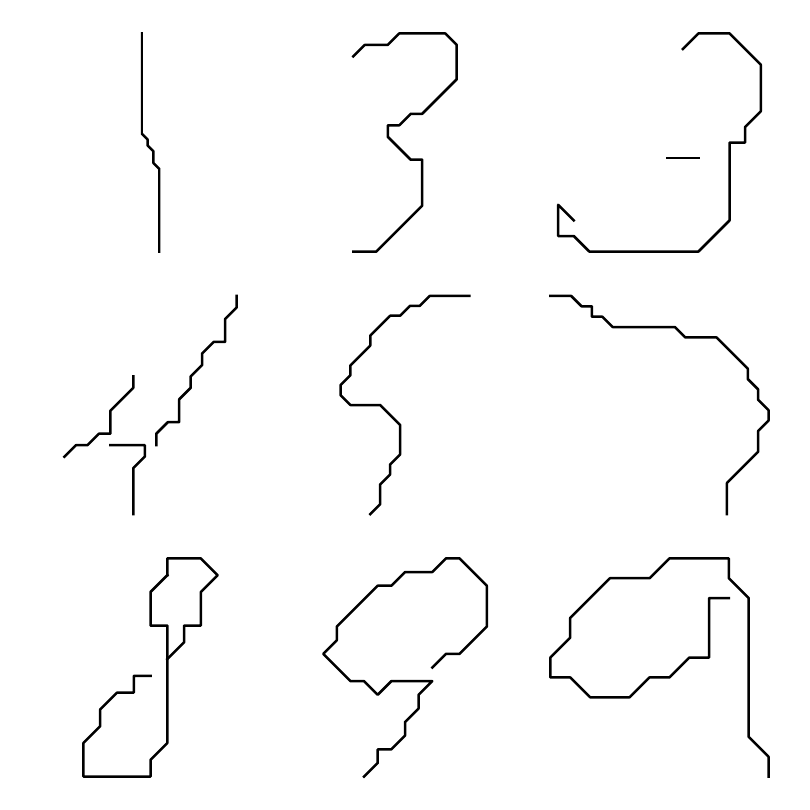}
        \caption{}
        \label{fig:mnist:d}
    \end{subfigure}
    \caption{Visualizations from the \dataset{MNIST stroke} dataset. The digit plots are generated by drawing the strokes as instructed by the sequence of pen moves. \textbf{(a)} shows the histogram of stroke counts in the training set. \textbf{(b)} shows randomly selected digits from sequences containing five strokes. \textbf{(c)} shows randomly selected digits from sequences containing three or fewer strokes, while \textbf{(d)} shows the corresponding digits generated from the resampled sequences.}
    \label{fig:mnist}
\end{figure}

\paragraph{Resampling}The original \dataset{MNIST stroke} dataset comprises sequences of varying lengths. While recurrent neural structures are specifically designed to handle variable-length sequences, our focus on studying embedding entropy and target information \emph{at concrete timesteps} runs into difficulty when sequences do not share a common length. To simplify matters, we opted to resample the sequences to a common length $T=38$, which is the median sequence length of the training set. The preprocessing procedure consists of the following steps:

\begin{enumerate}
\item The individual strokes and stroke-repositioning movements are extracted, and the rescaling ratio $r_s$ required to obtain the desired sequence length is determined. Stroke-repositioning movements are rescaled by this ratio but are otherwise left unchanged.

\item A length budget is allocated to each stroke according to its original seqence length. The budgets are chosen such that, together with the repositioning tokens, the resulting sequence has the desired common length.

\item Each stroke is represented as a two-dimensional curve, \ie a sequence of points obtained by cumulative summation of the displacements.

\item The elements are rescaled by the factor $r_s$ and smoothed coordinate-wise using a Savitzky--Golay filter. This reduces artifacts for sequences that undergo substantial up-scaling.

\item The curve is then divided by \emph{target points} placed equidistantly w.r.t. its Euclidean arc length. To this end, the curve is linearly interpolated between its given points. The number of target points is one greater than the length budget assigned to the stroke. See Fig.~\ref{fig:preprocessing} for an illustration of this and subsequent steps.

\item The current position is initialized at the start of the stroke, \ie $(0, 0)$. Target points are then processed in order. For each target point, its displacement from the current discrete position is computed, rounded component-wise to the nearest integer, and clipped to $\{-1,0,1\}$. The resulting discrete movement is then used to update the current position. The resulting sequence of discrete movements constitutes the resampled stroke. Note that $(0,0)$ movements are possible.

\item Finally, the resampled strokes are merged with the corresponding stroke-repositioning movements to reconstruct the complete sequence.

\end{enumerate}

\begin{figure}[t]
    \centering
    \includegraphics[width=\linewidth]{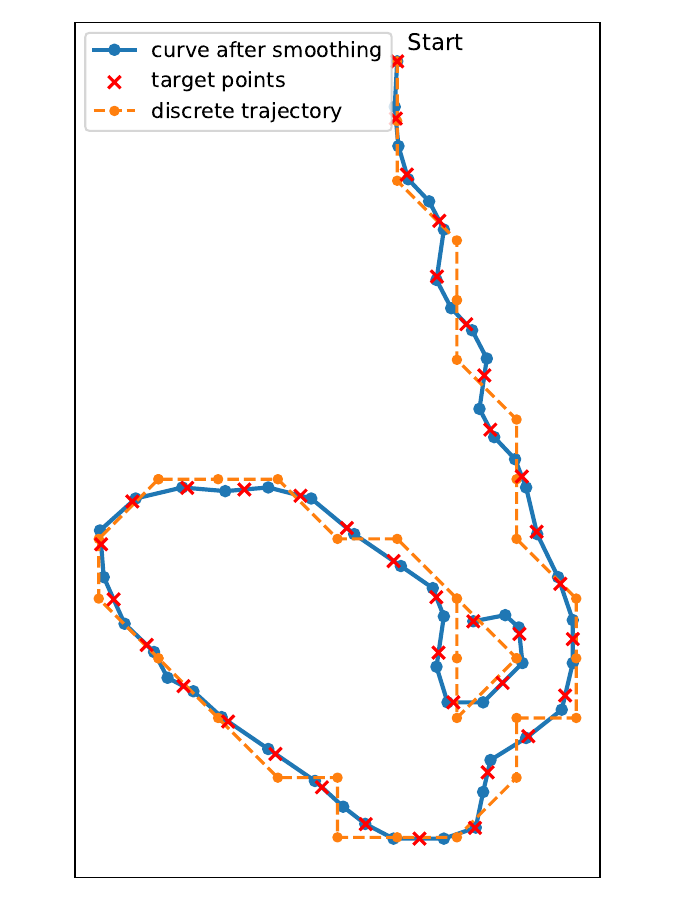}
    \caption{Illustration of the stroke resampling procedure. The blue curve shows the stroke after rescaling and smoothing. Target points are placed equidistantly with respect to the Euclidean arc length of the blue curve and are generally located between its elements by linear interpolation. The orange curve is the resulting discrete stroke, where each movement is chosen to approach the next target point as closely as possible subject to the discrete movement constraints. The original digit had a longer sequence length, and therefore the associated blue curve is downscaled by the factor $r_s$ before resampling, resulting in the shorter steps taken by blue compared to orange trajectory.}
    \label{fig:preprocessing}
\end{figure}

\subsection{Synthetic auto-regressive dataset generation}
\label{app:ar_dataset}
The aim of this synthetic dataset generation procedure is to provide a flexible framework for classification tasks. In particular, it allows the amount of target information in the input sequence to be controlled over time, enabling the \emph{informativeness} of individual timesteps to be varied. The main concept is the following: Sequence generation is driven by one of two class-specific autoregressive processes of order three (AR(3)) 
\begin{equation}
x^t = x[t] = \sum_{j=1}^3 \Phi_j x[t-j] + \epsilon[t],
\end{equation}
where $\{\Phi_j\}_j = \boldsymbol{\Phi}$ are the autoregressive coefficients, and $\epsilon^t \sim \mathcal{N}(0, \sigma_\epsilon^2)$ is the innovation process with variance $\sigma_\epsilon^2$. The classification task is to identify which of the two processes underlies a given sequence. The class-specific process is obscured either by interpolating the AR(3) sequence with a neutral stochastic process, or by reducing the differences between the class-specific AR(3) parameters. We describe the procedure in detail in the following.

We parameterize AR(3) processes using reflection coefficients $\mathbf{k}$ rather than conventional autoregressive coefficients $\boldsymbol{\Phi}$. This parameterization offers several advantages. Stationarity is ensured by constraining each reflection coefficient to $(-1,1)$, and the process variance has a simple closed-form expression in terms of $\mathbf{k}$. Moreover, the reflection coefficients are connected to prediction error of optimal linear regressors: The square of the $j$\textsuperscript{th} coefficient $k_j$ describes the fractional reduction in prediction error achieved by using $j$ lags instead of $j-1$. We choose the coefficients $\mathbf{k}_y = [0.7,\,0.7,\,0.5(-1)^y]^T$ with $y \in \{1, 2\}$. Thus, the class-specific processes only differ in the sign of the third reflection coefficient. The innovation variance $\sigma_\epsilon^2$ is set such that the processes have unit marginal variance: $\sigma_\epsilon^2 = \sum_{j=1}^3(1-k_j^2)$. 

To obtain sequences whose informativeness with respect to the target varies over time, we introduce a sequence $\lambda_\iota[t]\in[0,1]$, which controls the degree of target information at each timestep. A value of $\lambda_\iota[t]=0$ corresponds to no target information, whereas $\lambda_\iota[t]=1$ corresponds to maximally informative input, in the sense that $\lambda_\iota[t]=1$ means that the sequence elements are generated exclusively by the introduced AR(3) processes. We use two different mechanisms, which we call \txtmode{measurement noise} and \txtmode{information noise}. With \txtmode{measurement noise}, we generate a class-specific AR(3) sequence $\tilde{x}[t]$ and then mix it with white noise $\epsilon[t]\sim\mathcal{N}(0,1)$ via interpolation $x[t]=\sqrt{\lambda_\iota[t]}\tilde{x}[t]+\sqrt{1-\lambda_\iota[t]}\epsilon[t]$. This can be understood as a per-timestep linear interpolation which preserves variance. \txtmode{Information noise}, in contrast, modifies the reflection coefficients $\mathbf{k}_y=[0.7,\,0.7,\, \lambda_\iota[t]0.5(-1)^y]$ while generating the sequence. The innovation variance is adjusted accordingly, $\sigma_\epsilon^2[t] = \left[1-(\lambda_\iota[t]0.5)^2\right](1-0.7^2)^2$, to compensate for changes in variance. However, in contrast to \txtmode{measurement noise},  this guarantees unit marginal variance only when $\lambda_\iota[t]$ is constant over time. In all cases, the AR(3) processes were burnt in for 50 timesteps, during which $\lambda_\iota[t]$ was set to $1$. Note that the experiment reported in the main body of the paper uses \txtmode{measurement noise} with $\lambda_\iota(t) \in \{0, 1\}$.

\paragraph{Dataset setup for alternating informative phases}
To test how the dynamics of a trained network differ depending on whether the training set ends with an informative phase, we created two different types of training sets. The \txtmode{informative end} training set ends only on informative phases, and the \txtmode{uninformative end} training set only on uninformative phases. For each sample, a sequence $\lambda_\iota[t]$ is generated as follows. First, the number of toggle points is sampled uniformly from the integers $\{1,\ldots,n_{\max}\}$, where
$n_{\max} = \left\lfloor \frac{T}{\Delta_{\min}} \right\rfloor - 1$, $T$ is the sequence length and $\Delta_{\min}=15$ is the minimum interval length. The sequence is then divided into $n_t+1$ intervals, where $n_t$ is the sampled number of toggle points. Each interval is initialized to the minimum length $\Delta_{\min}$. The remaining timesteps are then randomly distributed among the intervals according to a multinomial distribution with equal probabilities. Within an interval, $\lambda_\iota[t]$ is constant, but it alternates between $0$ and $1$ between intervals, \ie at toggle points, with the initial value chosen such that the sequence ends at the prescribed value $\lambda_\iota[t]$, which is $1$ for \txtmode{informative end} and $0$ for \txtmode{uninformative end}. Figure~\ref{fig:lambdas} shows randomly drawn $\lambda_\iota[t]$ sequences for training with \txtmode{informative end}. 

Training LSTMs directly in the \txtmode{uninformative end} mode proved challenging, as the input never provides target information during the final 15 timesteps, making optimization difficult. To address this issue, we employ \emph{curriculum learning} to encourage the networks to learn extracting and retaining task-relevant information, regardless of the training mode. Specifically, we generate two training sets, $\mathcal{D}_{tr}^{(0)}$ and $\mathcal{D}_{tr}^{(1)}$, and train the networks on each for 50 epochs. Within each training set, all sequences share the same $\lambda_\iota[t]$. For $\mathcal{D}_{tr}^{(0)}$, we set $\lambda_\iota[t]=1$ throughout the sequence, allowing the networks to learn to extract task-relevant information. For $\mathcal{D}_{tr}^{(1)}$, we set $\lambda_\iota[t]=1$ for $t<195$ and $\lambda_\iota[t]=0$ during the final five timesteps. This is shown in Figure~\ref{fig:lambdas}. To evaluate the networks, we created test sets with one $\lambda_\iota[t]$ for all $\mathbf{x} \in \mathcal{D}_{te}$. This sequence starts with $\lambda_\iota[1]=0$ and toggles between $0$ and $1$ every 25 step (see Figure~\ref{fig:lambdas}).

\begin{figure}[t]
    \centering
    \includegraphics[width=\linewidth]{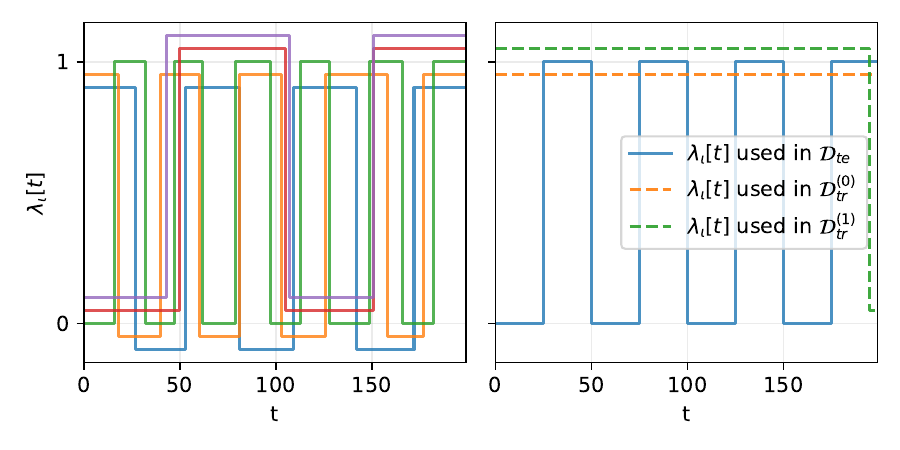}
    \caption{Sequences $\lambda_\iota[t]$ used to generate the AR datasets, controlling the informativeness of the input sequences $\mathbf{x}$. For visualization, a vertical offset was added to each sequence; in reality, $\lambda_\iota[t]\in{0,1}$. Left: Examples of $\lambda_\iota[t]$ used to generate training sequences in the \txtmode{informative end} mode. Right: The $\lambda_\iota[t]$ sequence used for all test sequences, together with the $\lambda_\iota[t]$ sequence used during the early epochs of curriculum learning.}
    \label{fig:lambdas}
\end{figure}

\section{Additional results}
In this appendix we provide additional plots and tables for two of the experiments described in the main text.  

\subsection{Alternating informative and uninformative phases}
In the main body, we report results only for \txtmode{measurement noise}. Here, we additionally investigate \txtmode{information noise} during uninformative phases. The data-generation process and experimental setup are described in Appendix~\ref{app:ar_dataset} and the main body, respectively. Each experimental setup was evaluated over ten trials. Note that $\lambda_\iota(t)$ varies between training and testing, whereas the EOS setting ($t_\mathrm{EOS} = 0$ or $1$) and noise type (\txtmode{measurement noise} or \txtmode{information noise}) does not.

Figs.~\ref{fig:alternating_measurement} and~\ref{fig:alternating_information} show the embedding entropy $\hat{I}(\mathbf{X};\mathbf{H}^t)$ and target information $\hat{I}(\mathbf{H}^t;Y)$ over timestep $t$ for \txtmode{measurement noise} and \txtmode{information noise}, respectively. For both noise conditions, we compute the change in information content between the onset and offset of each 25-timestep phase, excluding the first two and final phases of each trial. The result is shown in Fig.~\ref{fig:alternating_combined}, which compares change in $\hat{I}(\mathbf{X};\mathbf{H})$ and $\hat{I}(\mathbf{H};Y)$ across informative and uninformative phases, with and without an EOS signal. 

\begin{figure}[t]
    \centering
    \includegraphics[width=\linewidth]{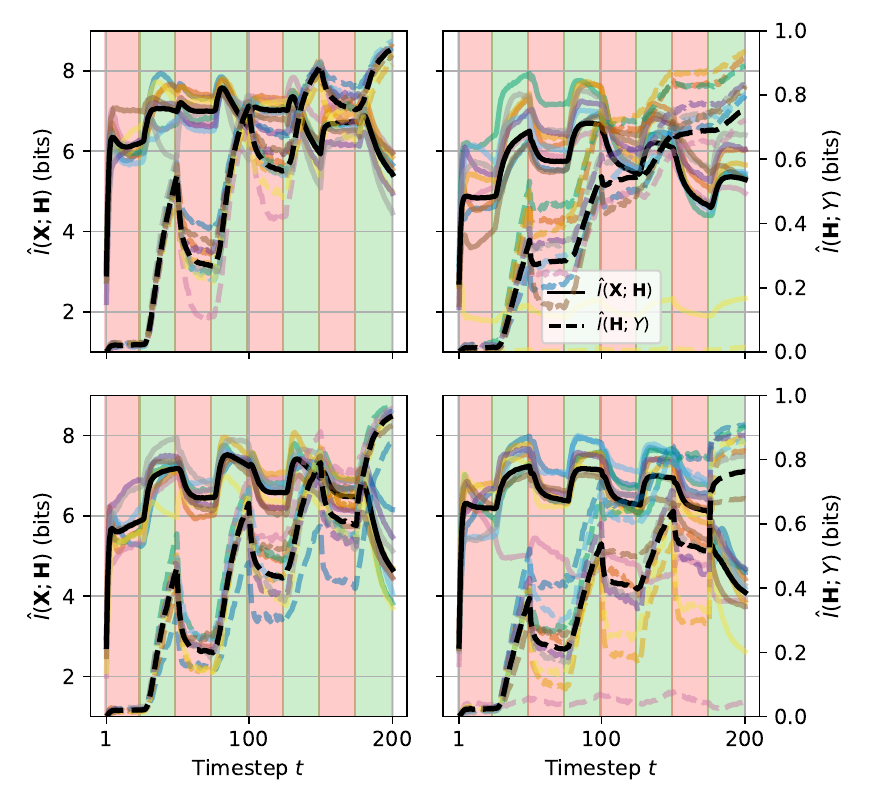}
    \caption{Information content behavior of networks trained and tested on AR dataset with alternating \txtmode{measurement noise}. Shown is embedding entropy $\hat{I}(\mathbf{X};\mathbf{H}^t)$ and target information $\hat{I}(\mathbf{H}^t; Y)$ over timesteps $t$ for all ten trials. The average embedding entropy and target information, aggregated over trials for each timestep $t$, is shown in black. Time intervals with $\lambda_\iota(t) = 0$ in the test sequences are shown in red, while those with $\lambda_\iota(t) = 1$ are shown in green. \textbf{Top: } No EOS information present during training and testing ($t_{\mathrm{EOS}}=0$). \textbf{Bottom: } EOS information is present during training and testing with $t_{\mathrm{EOS}}$.  \textbf{Left: } Networks are trained on \txtmode{informative end}. \textbf{Right: } Networks are trained on \txtmode{uninformative end}.}
    \label{fig:alternating_measurement}
\end{figure}

\begin{figure}[t]
    \centering
    \includegraphics[width=\linewidth]{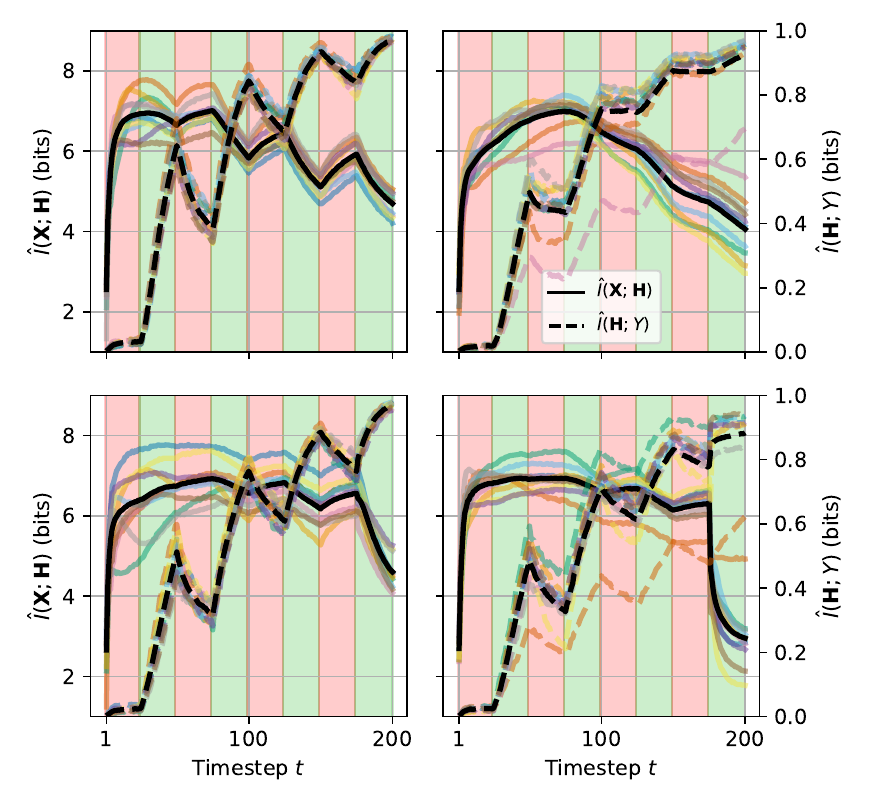}
    \caption{Information content behavior of networks trained and tested on AR dataset with alternating \txtmode{information noise}. Shown is embedding entropy $\hat{I}(\mathbf{X};\mathbf{H}^t)$ and target information $\hat{I}(\mathbf{H}^t; Y)$ over timesteps $t$ for all ten trials. The average embedding entropy and target information, aggregated over trials for each timestep $t$, is shown in black. Time intervals with $\lambda_\iota(t) = 0$ in the test sequences are shown in red, while those with $\lambda_\iota(t) = 1$ are shown in green. \textbf{Top: } No EOS information present during training and testing ($t_{\mathrm{EOS}}=0$). \textbf{Bottom: } EOS information is present during training and testing with $t_{\mathrm{EOS}}$.  \textbf{Left: } Networks are trained on \txtmode{informative end}. \textbf{Right: } Networks are trained on \txtmode{uninformative end}.}
    \label{fig:alternating_information}
\end{figure}

\begin{figure}[t]
    \centering
    \includegraphics[width=\linewidth]{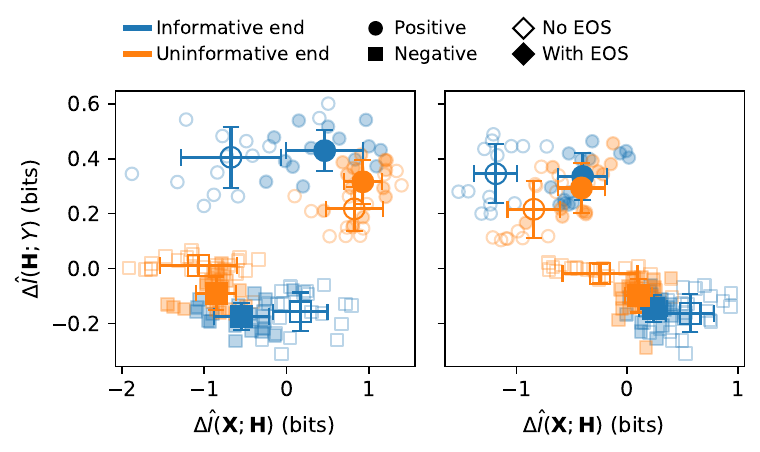}
    \caption{Effect of training on \txtmode{informative end} compared to \txtmode{uninformative end} on the information content dynamics, with and without an EOS token. Shown is the change in information content between the beginning and end of each phase, excluding the first two and final phases of each trial. Error bars indicate the mean and standard deviation for each color–symbol combination. \textbf{Left: } Uninformative phases during training and testing were generated using \txtmode{measurement noise}. \textbf{Right: } Same, except that \txtmode{information noise} was used for uninformative phases.}
    \label{fig:alternating_combined}
\end{figure}

For both \txtmode{measurement} and \txtmode{information noise}, networks exhibit stronger compression in the phase that occurs at the last timestep (see Fig.~\ref{fig:alternating_combined}). Additionally, the inclusion of the EOS signal ($t_\mathrm{EOS}=25)$ reduces the discrepancy between \txtmode{informative} and \txtmode{uninformative end} in both noise types. Despite these common trends, the results also differ depending on how uninformative phases are generated. In general, \txtmode{information noise} leads to compression during informative phases and expansion during uninformative phases. It is unclear what causes this inversion, but we hypothesize that it reflects a fundamental difference in how the two noise types remove task-relevant information. Since the two AR(3) processes differ only in the sign of the third reflection coefficient, the task can be interpreted as predicting the next observation from the first two lags and then determining whether the third lag is positively or negatively correlated with the resulting prediction residual. Setting the third reflection coefficient to zero (\txtmode{information noise}) removes this sign-dependent cue while preserving the underlying autoregressive structure. The resulting sequences therefore retain temporal structure similar to that of informative sequences, but no longer provide information about which process generated them. The LSTMs may consequently treat these intervals as sequences in which the relevant information is difficult to extract, rather than as irrelevant input. In contrast, replacing observations with i.i.d. noise (\txtmode{measurement noise}) disrupts the autoregressive structure and changes the sequence statistics more substantially. The network may therefore be able to recognize these intervals as uninformative and learn to discount them. If this hypothesized difference in network behavior--—either ignoring the input or attempting to extract information from it—--explains the observed discrepancies in compression and expansion, it would suggest a yet-unexplored relationship between embedding entropy and the computational strategies employed by the LSTMs. Investigating this could be an interesting direction for future work.

\subsection{Effect on Expressivity}
This section contains additional results on the experiments on expressivity. Table~\ref{tab:eos_xmiavg_K27} shows mean and standard deviation of the \emph{average} embedding entropy, where the average is computed across timesteps. Table~\ref{tab:eos_xmiavg_pvalue_K27} presents the corresponding Welch's t-test results for the null hypothesis that adding an EOS token does not affect the average embedding entropy. Table~\ref{tab:eos_pvalue_K27} reports the Welch's t-test results for the null hypothesis that adding an EOS token does not affect test accuracy.

\begin{table}[t]
\centering
\caption{Average embedding entropy, aggregated over timesteps, of LSTM networks across four experimental setups for each dataset, combining the presence or absence of an EOS signal with either one or two LSTM layers. Shown are the mean and standard deviation of the entropy in bits, over the best-performing 90\% of trials (27 of 30).}\label{tab:eos_xmiavg_K27}
\begin{tabular}{lccc}
\toprule
Dataset & EOS & 1 Layer & 2 Layers \\

\midrule
\multirow{2}{*}{\dataset{MNIST stroke}} & $\times$ & $6.00 \pm 0.24$ & $6.08 \pm 0.24$ \\
 & $\checkmark$ & $6.21 \pm 0.31$ & $6.39 \pm 0.20$ \\
\midrule
\multirow{2}{*}{\dataset{Tiselac}} & $\times$ & $6.45 \pm 0.27$ & $5.81 \pm 0.29$ \\
 & $\checkmark$ & $6.35 \pm 0.32$ & $5.97 \pm 0.41$ \\
\midrule
\multirow{2}{*}{\dataset{ElectricDevices}} & $\times$ & $5.15 \pm 0.44$ & $5.22 \pm 0.27$ \\
 & $\checkmark$ & $4.93 \pm 0.48$ & $5.38 \pm 0.35$ \\
 \midrule
\multirow{2}{*}{\dataset{AR short}} & $\times$ & $6.43 \pm 0.45$ & $4.96 \pm 0.36$ \\
 & $\checkmark$ & $6.88 \pm 0.36$ & $6.01 \pm 0.44$ \\
\bottomrule
\end{tabular}
\end{table}

\begin{table}[t]
\centering
\caption{Statistical results of Welch's $t$-test assessing the effect of EOS signal presence on the average embedding entropy reported in Table~\ref{tab:eos_xmiavg_K27}. A positive $t$-statistic means the presence of EOS leads to higher average embedding entropy.}
\label{tab:eos_xmiavg_pvalue_K27}
\setlength{\tabcolsep}{3pt}
\begin{tabular}{lcccc}
\toprule
Dataset & \multicolumn{2}{c}{1 Layer} & \multicolumn{2}{c}{2 Layers} \\
\cmidrule(lr){2-3} \cmidrule(lr){4-5}
 & $t$-statistic & $p$-value & $t$-statistic & $p$-value \\
\midrule
\dataset{MNIST stroke} & 2.7 & $\mathbf{1.1 \times 10^{-2}}$ & 5.2 & $\mathbf{3.5 \times 10^{-6}}$ \\
\midrule
\dataset{Tiselac} & -1.2 & $2.2 \times 10^{-1}$ & 1.7 & $1.1 \times 10^{-1}$ \\
\midrule
\dataset{ElectricDevices} & -1.8 & $8.0 \times 10^{-2}$ & 1.9 & $6.5 \times 10^{-2}$ \\
\midrule
\dataset{AR short} & 4.0 & $\mathbf{1.8 \times 10^{-4}}$ & 9.7 & $\mathbf{5.3 \times 10^{-13}}$ \\
\bottomrule
\end{tabular}
\end{table}

\begin{table}[t]
\centering
\caption{Statistical results of Welch's $t$-test assessing the effect of EOS signal presence on the accuracies reported in Table~\ref{tab:eos_accuracy_K27}. $p$-values $<0.05$ are shown in bold. A positive $t$-statistic means the presence of EOS leads to higher accuracy.}
\label{tab:eos_pvalue_K27}
\setlength{\tabcolsep}{3pt}
\begin{tabular}{lcccc}
\toprule
Dataset & \multicolumn{2}{c}{1 Layer} & \multicolumn{2}{c}{2 Layers} \\
\cmidrule(lr){2-3} \cmidrule(lr){4-5}
 & $t$-statistic & $p$-value & $t$-statistic & $p$-value \\
\midrule
\dataset{MNIST stroke} & 2.5 & $\mathbf{1.4 \times 10^{-2}}$ & 0.7 & $4.7 \times 10^{-1}$ \\
\midrule
\dataset{Tiselac} & -3.8 & $\mathbf{4.3 \times 10^{-4}}$ & 0.0 & $9.9 \times 10^{-1}$ \\
\midrule
\dataset{ElectricDevices} & 0.7 & $4.8 \times 10^{-1}$ & 2.5 & $\mathbf{1.4 \times 10^{-2}}$ \\
\midrule
\dataset{AR short} & 5.2 & $\mathbf{4.1 \times 10^{-6}}$ & 2.0 & $5.1 \times 10^{-2}$ \\
\bottomrule
\end{tabular}
\end{table}

Figs.~\ref{fig:ds_trials_a0} and~\ref{fig:ds_trials_a1} show the information content over timesteps $t$ across all 30 trials for LSTMs with one and two recurrent layers, respectively. The corresponding accuracies on the test set are displayed in Figs.~\ref{fig:hists_a0} and~\ref{fig:hists_a1}. As evident, some networks fail to learn the task adequately despite being trained for many epochs. Since these networks can be identified by their low training and validation accuracies, this supports our decision to exclude the 10\% worst-performing trials from the statistical analysis in the main text. The failures primarily affects networks trained on \dataset{ElectricDevices}, but we apply the exclusion to all networks for consistency.

Across real-world datasets, we observe less compression than on the synthetic \dataset{AR short}. Our hypothesis provides three explanations: (i) the embeddings may be harder to separate, consistent with the lower accuracies observed on real-world datasets, resulting in weaker pressure for final-timestep compression; (ii) the inputs are non-stationary, which is particularly true for \dataset{MNIST stroke} and \dataset{ElectricDevices}, limiting the propagation of information across timesteps; and (iii) the real-world datasets may require higher network complexity, leaving the networks more  underparameterized. However, if the embeddings are insufficiently separable to begin with, neither stationarity nor model capacity is expected to affect compression, as there is little incentive for final-timestep compression in the first place.  For example, we observe increased compression upon adding another layer only for \dataset{MNIST stroke} and \dataset{AR short} (compare Fig.~\ref{fig:ds_trials_a0} and Fig.~\ref{fig:ds_trials_a1}). We believe that the absence of this effect on the other datasets reflects insufficient embedding separability, and thus a lack of compression pressure for final-timestep embeddings.

For \dataset{ElectricDevices}, we even observe that the EOS signal increases embedding entropy at the final timestep. As discussed in the main text, the EOS signal may help the network decouple the dual roles of its embeddings: providing recurrent information that controls the gates at the next timestep and providing the penultimate-layer representation used by the output layer. Although such an explanation does not specifically predict an increase in embedding entropy, neither does it preclude one. Notably, on this dataset increases in target information $\hat{I}(\mathbf{H}^t; Y)$ are almost always accompanied by increases in embedding entropy $\hat{I}(\mathbf{X};\mathbf{H}^t)$. We do not analyze this effect further.

\begin{figure}[t]
    \centering
    \includegraphics[width=\linewidth]{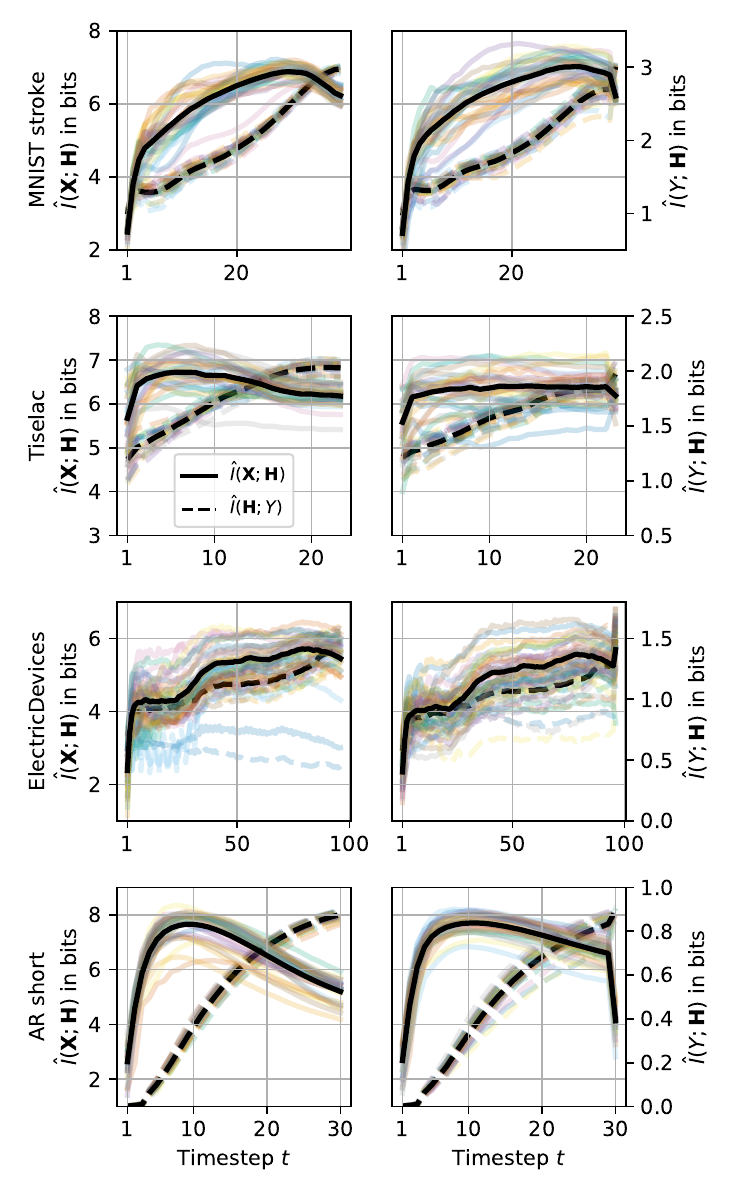}
    \caption{Embedding entropy $\hat{I}(\mathbf{X};\mathbf{H}^t)$ and target information $\hat{I}(\mathbf{H}^t; Y)$ over timesteps $t$ for different datasets, showing all 30 independently trained LSTMs per dataset, each with a single recurrent layer. The black lines show the respective averages across trials at each timestep. The dataset is indicated by the label on the left. \textbf{Left:} Without EOS signal, \textbf{Right:} With EOS signal ($t_\mathrm{EOS}=1$).}
    \label{fig:ds_trials_a0}
\end{figure}

\begin{figure}[t]
    \centering
    \includegraphics[width=\linewidth]{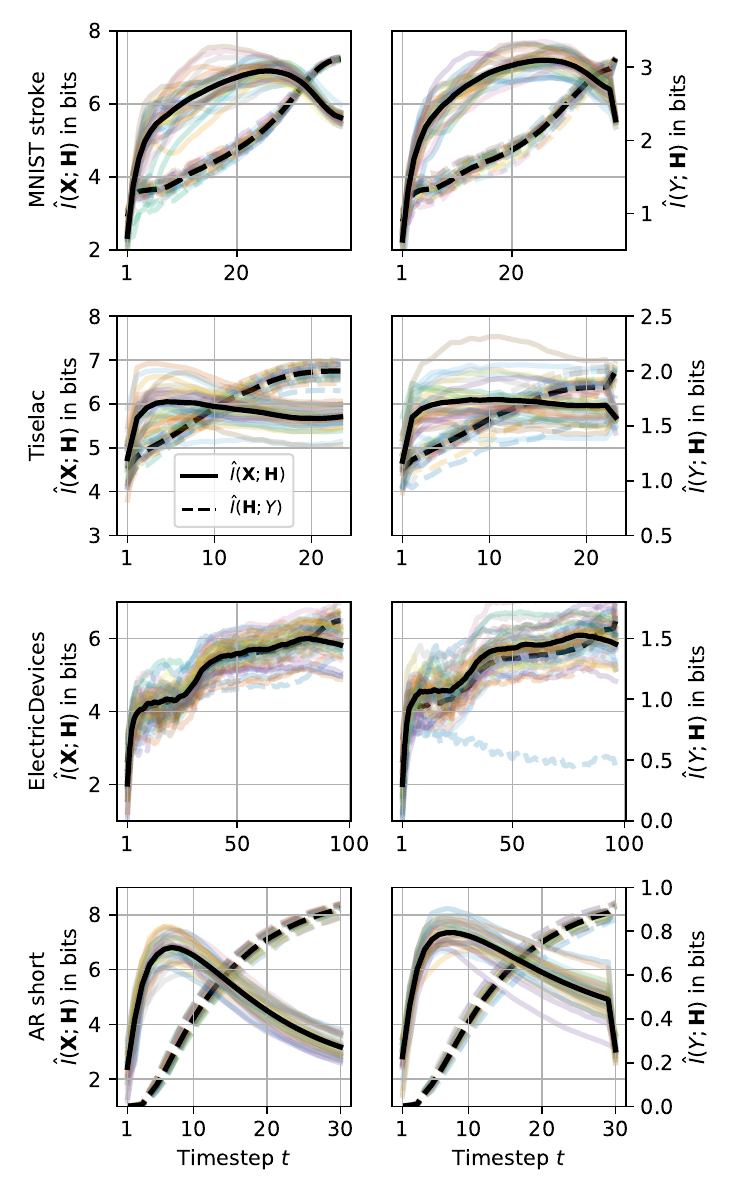}
    \caption{Embedding entropy $\hat{I}(\mathbf{X};\mathbf{H}^t)$ and target information $\hat{I}(\mathbf{H}^t; Y)$ over timesteps $t$ for different datasets, showing all 30 independently trained LSTMs per dataset, each with two recurrent layers. The black lines show the respective averages across trials at each timestep. The dataset is indicated by the label on the left. \textbf{Left:} Without EOS signal, \textbf{Right:} With EOS signal ($t_\mathrm{EOS}=1$).}
    \label{fig:ds_trials_a1}
\end{figure}

\begin{figure}[t]
    \centering
    \includegraphics[width=\linewidth]{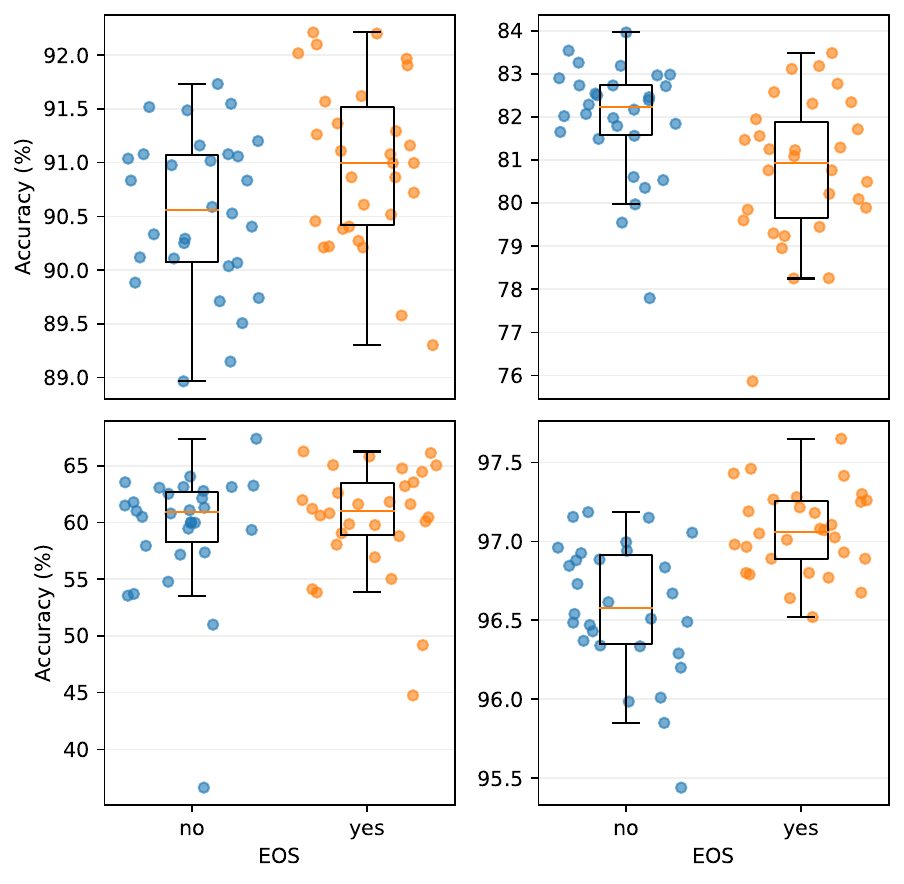}
    \caption{Accuracies of all 30 independently trained LSTMs per dataset with a single recurrent layer, using post-training early stopping. \textbf{Top left: } \dataset{MNIST stroke}. \textbf{Top right: } \dataset{Tiselac}. \textbf{Bottom left: } \dataset{ElectricDevices}. \textbf{Bottom right: } \dataset{AR short}.}
    \label{fig:hists_a0}
\end{figure}

\begin{figure}[t]
    \centering
    \includegraphics[width=\linewidth]{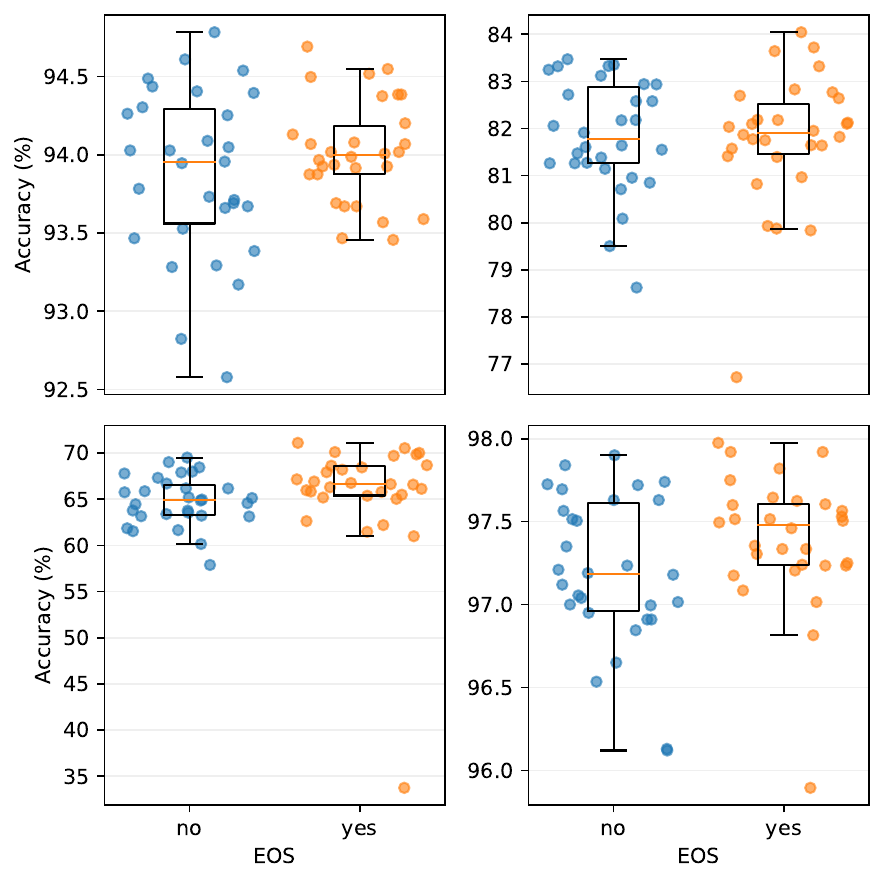}
    \caption{Accuracies of all 30 independently trained LSTMs per dataset with two recurrent layers, using post-training early stopping. \textbf{Top left: } \dataset{MNIST stroke}. \textbf{Top right: } \dataset{Tiselac}. \textbf{Bottom left: } \dataset{ElectricDevices}. \textbf{Bottom right: } \dataset{AR short}.}
    \label{fig:hists_a1}
\end{figure}

\end{document}